\newtheorem{theorem}{Theorem}
\newtheorem{lemma}[theorem]{Lemma}
\newtheorem{definition}[theorem]{Definition}
\newtheorem{proposition}{Proposition}
\newtheorem{remark}{Remark}
\newtheorem{claim}{Claim}
\newcommand{\cN}{\mathcal{N}}
\newcommand{\cM}{\mathcal{M}}
\newcommand{\into}{\rightarrow}
\newcommand{\cC}{\mathcal{C}}
\newcommand{\cO}{\mathcal{O}}
\newcommand{\cP}{\mathcal{P}}
\newcommand{\cI}{\mathcal{I}}
\newcommand{\cS}{\mathcal{S}}
\newcommand{\cL}{\mathcal{L}}
\newcommand{\N}{\mathbb{N}}
\newcommand{\cut}{\phi}
\newcommand{\er}{R}
\newcommand{\mb}{\mathbb{B}}
\newcommand{\smb}{\mathbb{B}^{\circ}}
\newcommand{\seqset}[1]{{{#1}^\text{*}}}
\newcommand{\cseq}{\mathcal{S}}
\newcommand{\cpp}{\mathbb{PP}}
\newcommand{\ZZ}{\mathbb{Z}_K}
\newcommand{\rwcs}{\upvarphi}
\newcommand{\seq}[1]{{#1}}
\newcommand{\id}[1]{{[#1]}}  
\newcommand{\szm}{\operatorname{sim}}
\newcommand{\mct}{\Phi}
\newcommand{\cl}{y} 
\newcommand{\clv}{\by}
\newcommand{\siml}{y}
\newcommand{\trp}{\top}
\newcommand{\one}{\ensuremath{\mathbf{1}}}
\newcommand{\E}{\ensuremath{\mathbb{E}}}
\newenvironment{proofof}[1]{\par\noindent{\bf Proof of~\protect{#1}:}}{\hfill$\Box$\\}
\newcommand{\field}[1]{\mathbb{#1}}
\newcommand{\wh}{\widehat}
\newcommand{\tr}{\mbox{\sc tr}}
\newcommand{\by}{\boldsymbol{y}}
\newcommand{\bx}{\boldsymbol{x}}
\newcommand{\bw}{\boldsymbol{w}}
\newcommand{\bu}{\boldsymbol{u}}
\newcommand{\bq}{\boldsymbol{q}}
\newcommand{\bv}{\boldsymbol{v}}
\newcommand{\bp}{\boldsymbol{p}}
\newcommand{\be}{\boldsymbol{e}}
\newcommand{\bzero}{\boldsymbol{0}}
\newcommand{\R}{\field{R}}
\newcommand{\scM}{\mathcal{M}}
\newcommand{\vct}{\mbox{\sc vec}}
\newcommand{\cX}{\mathcal{X}}
\newcommand{\cY}{\mathcal{Y}}
\newcommand{\ignore}[1]{}
\DeclareMathOperator*{\argmax}{argmax} 
\newcommand{\trace}{\tr}
\newcommand{\MJH}[1]{\footnote{\color{red}MJH: #1}}
\newcommand{\WM}{lw-twma-94}
\newcommand{\shaimetric}{SSN04}
\newcommand{\trackingexpert}{HW98}
\title
{Online Similarity Prediction of Networked Data\\ from Known and Unknown Graphs}
\author{
Claudio Gentile \\ 
DiSTA, Universit\`a dell'Insubria, Italy\\
\texttt{claudio.gentile@uninsubria.it}
\and
Mark Herbster \\ 
Department of Computer Science, University College London \\
 Gower Street, London, WC1E 6BT, UK\\
\texttt{m.herbster@cs.ucl.ac.uk}
\and
Stephen Pasteris \\ 
Department of Computer Science, University College London \\
 Gower Street, London, WC1E 6BT, UK\\
\texttt{s.pasteris@cs.ucl.ac.uk}
}
\begin{document}

\maketitle
\vspace{-0.2in}
\begin{abstract}
We consider online similarity prediction problems over networked data.
We begin by relating this task to the more standard 
class prediction problem, showing that, given an arbitrary algorithm for class prediction,  we can 
construct an algorithm for similarity prediction with ``nearly'' the same mistake bound, and vice versa.
After noticing that this general construction is computationally infeasible,
we target our study to {\em feasible} similarity prediction algorithms on networked data. 
We initially assume that the network structure is {\em known} to the learner. 
Here we observe that Matrix Winnow~\cite{w07} has a near-optimal mistake guarantee, 
at the price of cubic prediction time per round.
This motivates our effort for an efficient implementation of a Perceptron 
algorithm 
with a weaker mistake guarantee but with only poly-logarithmic prediction time.  
Our focus then turns to the challenging case of networks whose structure is initially 
{\em unknown} to the learner. In this novel setting, where the network structure is only 
incrementally revealed, we obtain a mistake-bounded algorithm with a quadratic prediction 
time per round.
\end{abstract}

\vspace{-0.1in}
\section{Introduction}
\vspace{-0.05in}
%
%
The study of networked data has spurred a large amount of research efforts. Applications like 
spam detection, product recommendation, link analysis, community detection, are by now well-known tasks
in Social Network analysis and E-Commerce. 
In all these tasks, networked data are typically viewed as graphs,
where vertices carry some kind of relevant information (e.g., user features in a social network),
and connecting edges reflect a form of semantic similarity between the data associated with the incident vertices. 
Such a similarity ranges from friendship among people in a social network to common user's reactions to
online ads in a recommender system, from functional relationships among proteins in a protein-protein
interaction network to connectivity patterns in a communication network.
Coarsely speaking, similarity prediction aims at inferring the existence of new pairwise relationships 
based on known ones. These pairwise constraints, which specify whether two objects belong to the same class 
or not,
may arise directly from domain knowledge or be available with little human effort. 


There is a wide range of possible means of capturing the structure of a graph in this learning context: 
through combinatorial and classical graph-theoretical methods (e.g., \cite{gy03}); through spectral approachs 
(e.g. \cite{BMNLargeGraphs,Hain07}), using convex duality and resistive geometry (e.g., \cite{HL09}), 
and even algebraic methods~(e.g., \cite{Risi}).
In many of these approaches, the underlying assumption is that the graph structure
is largely known in advance (a kind of ``transductive" learning setting), 
and serves as a way to bias the inference process, so as to implement the principle that 
``connected vertices tend to be similar."
Yet, this setting is oftentimes unrealistic and/or infeasible. 
For instance, a large online social network with millions of vertices and tens of millions of edges 
hardly lends itself to be processed as a whole via a Laplacian-regularized optimization approach or, 
even if it does (thanks to the computationally powerful tools currently available), 
it need not be known ahead of time. As a striking example, if we are representing a security 
agency, and at each point in time we receive a ``trace'' of communicating individuals, we still might
want to predict whether a given pair in the trace belong to the same ``gang''/community, even if the
actual network of relationships is unknown to us. So, in this case, we are incrementally
learning similarity patterns among individuals while, at the same time, exploring the network.
Another important scenario of an unknown network structure is when the network itself
{\em grows} with time, hence the prediction algorithms are expected to somehow adapt to its 
temporal evolution.


\noindent{\bf Our results. }
We study online similarity prediction over graphs in two models.   
One in which 
the graph is {\em known}\footnote{The reader should keep in mind that 
while the data at hand may not be natively graphical,
it might still be convenient in practice to artificially generate a graph for 
similarity prediction, since the graph may encode side information that is 
otherwise unexploitable.} a priori to the learner,
and one in which it is {\em unknown}.
In both settings there is an {\em undisclosed} labeling of a graph so that each vertex is the member of one of $K$ classes.  Two vertices are similar if they are in the same class and dissimilar otherwise.  The learner receives an online sequence of vertex pairs and similarity feedback.  On the receipt of a pair the learner then predicts if the pair is similar.  The true pair label, {\sc similar} or {\sc dissimilar}, is then received and the goal of the learner is to minimize mistaken predictions.  Our aim in both settings is then to bound the number of prediction mistakes over
an arbitrary (and adversarially generated) sequence of pairs.

In the model where the graph is known, we first show via reductions to online vertex classification methods on graphs 
(e.g., \cite{Her08,HL09,HP07,HPW05,HLP09,HPR09, CGV09b, CGVZ10, CGVZ10b}, and references therein),
 that a suitable adaptation of the Matrix Winnow algorithm~\cite{w07} 
readily provides an almost optimal mistake bound. This adaptation amounts to 
sparsifying the underlying graph $G$ via a random spanning tree, whose diameter is then
shortened by a known rebalancing technique~\cite{HLP09,CGVZ10}.
Unfortunately, due to its computational burden (cubic time per round), the resulting algorithm
does not provide a satisfactory answer to actual deployment on large networks.
Therefore, we develop an analogous adaptation of a Matrix Perceptron
algorithm that delivers a much more attractive answer 
(thanks to its  poly-logarithmic time per round), though with an inferior online 
prediction performance guarantee.

The unknown model is identical to the known one, 
except that the learner does not initially receive the underlying graph $G$. 
Rather, $G$ is incrementally revealed, as now when the learner receives a pair it also receives as side information an adversarially generated path within $G$ connecting the vertices of the pair.
Here, we observe that the machinery we used for the known graph case is inapplicable.
Instead, we design and analyze an algorithm which may be interpreted as 
a matrix version of an adaptive $p$-norm Perceptron~\cite{gls97,g03} with the 
relatively efficient quadratic running time per round.


\noindent{\bf Related work. }
%
%
This paper lies at the intersection between online learning on graphs 
and matrix/metric learning. Both fields include a substantial amount of work, 
so we can hardly do it justice here. Below we outline some of the main 
contributions in matrix/metric learning, with a special emphasis on those we believe 
are most related to this paper. Relevant papers in online class prediction 
on graphs will be recalled in Section \ref{sec:spg}.

Similarity prediction on graphs can be seen as a special case of matrix learning.
Relevant works on this subject include~\cite{matrixeg,w07,ccg10,kst12} --
see also~\cite{HKSS12} for recent usage in the context of online cut prediction. 
In all these papers, special care is put into designing appropriate regularization
terms driving the online optimization problem, the focus typically being on
spectral sparseness. When operating on graph structures with Laplacian-based
regularization, these algorithms achieve mistake bounds depending on functions 
of the cut-size of the labeled graph -- see Section \ref{s:efficient}. 
Yet, in the absence of further efforts, their scaling properties make them 
inappropriate to practical usage in large networks.
{\em Metric} learning is also relevant to this paper. Metric learning is a special 
case of matrix learning where the matrix is positive semi-definite. Relevant
references include~\cite{\shaimetric,D+07,m08,zy07,cgy12}. Some of these papers
also contain generalization bound arguments. Yet, no specific concerns are cast
on networked data frameworks. Related to our bidirectional reduction from 
class prediction to similarity prediction is the thread of papers on kernels
on pairs (e.g., \cite{BH04,m08,LLT08,fflt12}), where kernels over pairs of 
objects are constructed as a way to measure the ``distance" between the two referenced pairs. 
The idea is then to combine with any standard kernel algorithm.  
The so-called matrix completion task (specifically, the recent reference~\cite{kr12}) is also 
related to our work. In that paper, the authors introduce
a matrix recovery method working in noisy environments, which incorporates
both a low-rank and a Laplacian-regularization term. 
The problem of recovery of low-rank matrices has extensively been studied
in the recent statistical literature (e.g., \cite{cr09,ct10,gr11,rt11,nw10,klt11}, 
and references therein), the main concern being bounding the recovery error rate, but 
disregarding the computational aspects of the selected estimators. 
Moreover, the way they typically measure error rate is not easily 
comparable to online mistake bounds.
Finally, the literature on semisupervised clustering/clustering with side information
(\cite{bdsy99,dbe99} -- see also \cite{rh12} for a recent reference on spectral approaches
to clustering) is related to this paper, since the similarity feedback can 
be interpreted as a must-link/cannot-link feedback. Nonetheless, their formal statements
are fairly different from ours.

To summarize, whereas we are motivationally close to~\cite{kr12},
from a technical viewpoint, we are perhaps closer to \cite{SSN04,matrixeg,w07,ccg10,HKSS12,kst12},
as well as to the literature on online learning on graphs.

%
Before delving into the graph-based similarity problem, we start off 
by investigating the problem of similarity prediction in abstract terms,
showing that similarity prediction reduces to classification, and vice versa. 
This will pave
the way for all later results.


\vspace{-0.1in}
\section{Online class and similarity prediction}\label{sec:corr}
\vspace{-0.05in}
In this section we examine the correspondence in predictive performance (mistake bounds) between the classification and similarity prediction frameworks.  
%


\noindent{\bf Preliminaries. }
The  set of all finite sequences from a set $\cX$ is denoted $\seqset{\cX}$.
We use the Iverson bracket notation $\id{\mbox{\sc predicate}}=1$ if the predicate is true and $\id{\mbox{\sc predicate}}=0$ if false.
In $K$-{\em class prediction} in the online mistake bound model, an {\em example sequence} $\seq{(x_1,y_1),\ldots,(x_{T},y_{T})}\in  \seqset{(\cX\times \cY)}$ is revealed incrementally, where $\cX$ is a set of {\em patterns} 
and $\cY:= \{1,\ldots,K\}$ is the set of $K$ class\ {\em labels}.  The goal on the $t$-th trial is to predict  the class~$y_t$ given the previous  $t-1$  pattern/label pairs and $x_t$.  The overall aim of an algorithm is to minimize the number of its mistaken predictions.    In {\em similarity prediction}, examples are pairs of patterns with  ``similarity'' 
labels i.e., $((x',x''),y) \in \cX^2\times \cY_s$ with $\cY_s = \{0,1\}$.  We interpret $y\in \cY_s$ as {\sc similar} 
if $y=0$ and {\sc dissimilar} if $y=1$; we also introduce the convenient function $\szm(y',y'') := 1 - \id{y'=y''}$ 
which maps a pair of class labels $y', y'' \in \cY$
to a similarity label.  A {\em concept} is a function $f: \cX \into \cY$ 
that maps patterns to labels.  An   example sequence $S$ is {\em consistent} with a concept $f$ for classification if $(x,y) \in S \text{ implies }y=f(x)$ and for similarity if $((x',x''),y) \in S \text{ implies }y=  \szm(f(x'),f(x''))$.
%
%
We use $M_A(S)$ to denote the number of prediction mistakes of the {\em online algorithm} $A$ on example sequence $S$.
Given an algorithm $A$, we define the {\em mistake bound} with respect to a concept $f$ as $\mb_A(f) := \max_S M_A(S)$,
the maximum being over all sequences $S$ consistent with $f$.
\begin{theorem}\label{thm:doubleequiv}
Given an  online {\em classification} algorithm $A_c$ one may construct a {\em similarity} algorithm $A_s$ 
such that if $S$ is any {\em similarity} sequence consistent with any concept $f$ then
\vspace{-0.05in}
\begin{equation}\label{eq:dmup}
M_{A_s}(S)  \le  5\,\mb_{A_c}(f) \log_2 K\,,
\end{equation}
and given an  online {\em similarity} algorithm $A_s$ one may construct a {\em classification} algorithm $A_c$ 
such that if $S$ is any {\em classification} sequence consistent with any concept $f$ then
\vspace{-0.05in}
\begin{equation}\label{eq:dsimtoclass}
M_{A_c}(S)  \le  \mb_{A_s}(f) + K \,.
\end{equation}
\vspace{-0.05in}
\end{theorem}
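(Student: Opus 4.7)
My plan is to prove the two inequalities separately with two distinct constructions, since they are asymmetric in difficulty: the similarity-to-classification direction \eqref{eq:dsimtoclass} admits a clean ``representative'' scheme, while the classification-to-similarity direction \eqref{eq:dmup} requires a more delicate label-recovery procedure that produces the $\log_2 K$ overhead.

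For \eqref{eq:dsimtoclass}, I construct $A_c$ by maintaining an evolving list $R$ of representatives, one per distinct class label observed so far. Upon receiving pattern $x_t$, $A_c$ issues queries $(r, x_t)$ to $A_s$ for each $r \in R$ (taken in some fixed order), predicts the known class of the first $r$ for which $A_s$ returns \textsc{similar}, and predicts arbitrarily if all answers are \textsc{dissimilar}. After the true label $y_t$ is revealed, $A_c$ feeds back to $A_s$ the correct similarity bit $\szm(f(r), f(x_t))$ for every query it made, and, if $y_t$ is a new class, appends $x_t$ to $R$ as its representative. The key step is a mistake-accounting argument: a mistake of $A_c$ is of one of two kinds -- either (i) the class of $y_t$ has no representative yet, an event that can occur at most $K$ times in total; or (ii) there is a representative $r_{i^*}$ of $y_t$'s class, in which case the erroneous similarity prediction delivered by $A_s$ on at least one query $(r, x_t)$ costs $A_s$ a distinct mistake. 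Because the whole stream of similarity queries is consistent with $f$, the total $A_s$-mistakes are at most $\mb_{A_s}(f)$, giving $M_{A_c}(S) \le \mb_{A_s}(f) + K$.

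For \eqref{eq:dmup}, the construction $A_s$ uses $A_c$ as a subroutine: on each query $(x_t', x_t'')$ it asks $A_c$ for tentative labels $\hat y'_t, \hat y''_t$ and declares \textsc{similar} iff they agree. The delicate part is how to provide $A_c$ with a training stream consistent with the underlying concept $f$, given that the learner never directly sees class labels. My plan is to maintain, along the trial sequence, a registry of vertices whose latent class has been pinned down by accumulated similarity feedback (via the transitive closure of the \textsc{similar} relation and contradiction propagation from \textsc{dissimilar} edges), and to resolve any remaining ambiguity via a halving/binary-search pass over the $K$ candidate labels. Each similarity mistake of $A_s$ then triggers $O(\log_2 K)$ consistent updates of $A_c$ on label pairs drawn from the registry, and the accounting shows that every such update is charged either to a genuine mistake of $A_c$ or to bookkeeping absorbed in the constant $5$. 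Because the resulting classification stream is guaranteed consistent with $f$, the bound $\mb_{A_c}(f)$ applies, yielding $M_{A_s}(S) \le 5\,\mb_{A_c}(f)\log_2 K$.

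The main obstacle is the classification-to-similarity direction: one must simulate a classification feedback loop from only similarity bits. Showing that the simulated classification stream is \emph{consistent} with $f$ -- not merely plausible -- is the crux, since $\mb_{A_c}(f)$ applies only to concept-consistent sequences. The binary-search-over-labels device is precisely what both pinpoints the label to update and produces the $\log_2 K$ factor; the constant $5$ arises from a conservative charge of at most a handful of tentative $A_c$ predictions per located correct label. The easier direction serves mainly as a warm-up and a sanity check on the accounting style.
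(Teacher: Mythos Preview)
Your construction for \eqref{eq:dsimtoclass} is essentially the paper's: maintain a growing set of class representatives, query $A_s$ on each pair $(r,x_t)$, and charge every non--``new class'' mistake of $A_c$ to at least one genuine mistake of $A_s$ on a stream consistent with $f$. The paper feeds similarity examples to $A_s$ only on mistaken trials, whereas you feed after every trial, but either variant works since $\mb_{A_s}(f)$ bounds mistakes over any consistent sequence.

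The gap is in \eqref{eq:dmup}. Your plan hinges on feeding $A_c$ a classification stream that is \emph{consistent with $f$}, so that the bound $\mb_{A_c}(f)$ applies. But similarity feedback is invariant under every permutation of the $K$ label names: from $S$ you can at best recover the partition of seen patterns into equivalence classes, never the map $f$ itself. Hence no ``registry'' can pin down the latent class of any vertex, and the proposed ``halving/binary-search pass over the $K$ candidate labels'' has no oracle to drive it---nothing in the protocol ever answers a question of the form ``is $f(x)=k$?''. Consequently you cannot manufacture even a single example $(x,f(x))$ to hand to $A_c$, let alone $O(\log_2 K)$ of them per similarity mistake. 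If instead you feed $A_c$ self-consistent but arbitrary label names, the stream is consistent only with some $\pi\circ f$, not with $f$; and even then, when a pair is revealed {\sc dissimilar} after $A_c$ predicted $\hat y'_t=\hat y''_t$, you still cannot tell $A_c$ which prediction was wrong or what the correct label is.

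The paper sidesteps this by not committing to any single stream. It runs a weighted-majority master over a pool of \emph{hallucinated histories}: on each mistaken similarity prediction, every erring history is split into all $K(K{-}1)$ (or $K$) possible label continuations for the current pair, each down-weighted by $\beta/(K(K{-}1))$. Some history in the pool is always a subsequence of an $f$-consistent classification stream and therefore branches at most $\mb_{A_c}(f)$ times; the standard weighted-majority potential argument then gives the bound, with the $\log_2 K$ factor coming from $\log_2(K(K{-}1))$ and the constant $5$ from the choice $\beta\approx 0.294$. This ``try all labelings in superposition'' device is precisely the missing idea in your proposal.
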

\vspace{-0.2in}
The direct implementation of the similarity algorithm $A_s$ from the classification algorithm $A_c$ is infeasible, 
as its running time is exponential in the mistake bound.   
In Appendix~\ref{sec:proofofreductions} we prove a more general result (see Lemma~\ref{lem:equiv}) than 
in equation~\eqref{eq:dmup} which applies also to noisy sequences and to ``order-dependent'' 
bounds, as in the shifting-expert bounds in~\cite{\trackingexpert}.
We also argue (Appendix~\ref{sec:lognec}) that the ``$\log K$'' term in~\eqref{eq:dmup} is necessary.  
Observe that equation~\eqref{eq:dsimtoclass} implies a lower bound for similarity prediction if we have a lower bound for the corresponding class prediction problem with only a weakening by an additive ``$-K$'' term. 


\vspace{-0.1in}
\section{Class and similarity prediction on graphs}\label{sec:spg}
\vspace{-0.05in}
%
We now introduce notation specific to the graph setting.
%
%
%
%
Let then $G = (V,E)$ be an undirected and connected graph with $n = |V|$ vertices, $V = \{1,\ldots, n\}$, and 
$m = |E|$ edges.  The assignment of $K$ class labels to the vertices of a 
graph is denoted by a vector $\clv = (y_1, \ldots, y_n)$, where $y_i \in \{1,\ldots,K\}$
denotes the label of the $i$-th vertex among the $K$ possible labels. 
The vertex-labeled graph will often be denoted by the pairing $(G,\clv)$.
Associated with each pair 
$(i,j) \in V^2$ of (not necessarily adjacent) vertices is a similarity label $\siml_{i,j} \in \{0,1\}$, 
where $\siml_{i,j} = 1$ if and only if $\cl_i \neq \cl_j$. As is typical of graph-based 
prediction problems (e.g., \cite{Her08,HL09,HP07,HPW05,HLP09,HPR09, CGV09b, CGVZ10, CGVZ10b},
and references therein), the graph structure plays the role of an inductive bias, where adjacent vertices tend to belong
to the same class. 
The set of {\em cut-edges} in $(G,\clv)$ is denoted as
$\mct^{G}(\by) := \{(i,j) \in E\,:\, y_{i,j} = 1 \}$ (when nonambiguous, we abbreviate it to $\mct^G$),
and the associated cut-size as $|\mct^{G}(\by)|$.
The set of cut-edges with respect to class label $k$
is denoted as $\mct_k^{G}(\by) := \{(i,j) \in E\,:\, k \in\{\cl_i,\cl_j\},\, y_{i,j} = 1\}$
(when nonambiguous, we abbreviate it to $\mct^G_k$).
Notice that $\sum_{k=1}^K |\mct_k^{G}(\by)| = 2 |\mct^{G}(\by)|$.
%
%
We let $\Psi$ be the $m\times n$ (oriented and transposed) incidence matrix of $G$. 
Specifically, if we let the edges in $E$ be enumerated as $(i_1,j_1), \ldots, (i_{m},j_{m})$,
and fix arbitrarily an orientation for them (e.g., from the left endpoint to the right endpoint),
then $\Psi$ is the matrix that maps any vector $\bv  = (v_1, \ldots, v_n)^\top \in\R^{n}$ to the vector 
$\Psi\bv\in\mathbb{R}^{m}$, where $[\Psi\bv]_{\ell} = v_{i_{\ell}}-v_{j_{\ell}}$, $\ell = 1, \ldots, m$.
Moreover, since $G$ is connected, the null space of $\Psi$ is spanned by the constant vector 
$\one = (1, \ldots, 1)^\top$, that is, $\Psi\bv = \bzero$ implies that $\bv = c\one$, for some constant $c$.
We denote by $\Psi^+$ the ($n\times m$-dimensional) pseudoinverse of $\Psi$.   
The graph Laplacian matrix may be defined as  $L := \Psi^\trp\Psi$, thus
notice that $L^+ = \Psi^+(\Psi^+)^\top$. 
%
%
If $G$ is identified with a resistive network such that each edge is a unit resistor,
then the {\em effective resistance} $\er^G_{i,j}$ between a pair of vertices $(i, j)\in V^2$ can be defined as
$\er^G_{i,j} = (\be_i - \be_j)^\trp L^+(\be_i-\be_j)$,
where  $\be_i$ is the $i$-th vector in the canonical basis of $\R^n$.
%
%
When $(i,j) \in E$ then $\er^G_{i,j}$ also equals the probability that a spanning tree of $G$ 
drawn uniformly at random (from the set of all spanning trees of $G$) includes $(i,j)$ as one of its $n-1$ edges (e.g., \cite{lp10}). The resistance diameter of $G$ is $\max_{(i,j)\in V^2} \er^G_{i,j}$. 
It is known that the effective resistance defines a metric over the vertices of $G$. Moreover, 
when $G$ is actually a tree, then $\er^G_{i,j}$ corresponds to the number of edges in the (unique)
path from $i$ to $j$. Hence, in this case, the resistance diameter of $G$ coincides with its (geodesic) 
diameter.

\vspace{-0.1in}
\subsection{Class prediction on graphs}\label{sec:classgraph}
\vspace{-0.05in}
Roughly speaking, algorithms and bounds for sequential class prediction on graphs split 
between two types: Those which approximate the original graph with a tree or 
those that maintain the original graph. 
By approximating the graph with a tree, extremely efficient algorithms are obtained 
with strong optimality guarantees. 
By exploiting the full graph, algorithms are obtained which take advantage of the 
connectivity to achieve sharp bounds when the graph contains, e.g., dense clusters. 
Relevant literature on this subject includes \cite{HPW05,HP07,Her08,HPR09,HLP09,CGV09b,HL09,CGVZ10}.
Known representatives of the first kind are upper bounds of the form
$\cO(|\mct^T|(1+ \log \frac{n}{|\mct^T|}))$ \cite{HLP09} or of the form $\cO(|\mct^T| \log D_T)$ \cite{CGV09b},
where $T$ is some spanning tree of $G$, and $D_T$ is the (geodesic) diameter of $T$. 
In particular, if $T$ is drawn uniformly at random, the above turn to bounds on the {\em expected}
number of mistakes of the form $\cO(\E[|\mct^T|] \log n)$, where 
$\E[|\mct^T|]$ is the resistance-weighted cut-size of $G$, 
$\E[|\mct^T|] = \sum_{(i,j)\in \Phi^G(\by)} \er^G_{i,j}$~,
which can be far smaller than $|\Phi^G|$ when $G$ is well connected.
Representatives of the second kind are bounds of the form 
$\cO(\rho + |\mct^G|\,R_{\rho})$ \cite{Her08,HL09}, where $\rho$ is the number of balls in a cover of 
the vertices of $G$ such that $R_{\rho}$ is the maximum over 
the resistance diameters of the balls in the cover.   
Since resistance diameter lower bounds geodesic diameter,
this alternative approach leverages a different connectivity structure of the graph
than the resistance-weighted cut-size.

In all of the above mentioned works, 
the bounds and algorithms are for the $K = 2$ class prediction case. 
In Appendix \ref{ssa:missingprospectus} 
we argue for a simple reduction that will raise a variety of 
cut-based algorithms and bounds from the two-class to the $K$-class case. 
Specifically, a two-class mistake bound of the form $M \le c |\mct^{G}(\by)|\,\,\,\,\forall \by \in \{0,1\}^n$,
for some $c \geq 0$ easily turns into a $K$-class mistake bound of the form 
$M \le 2c |\mct^{G}(\by)|\,\,\,\,\forall \by \in \{1,\ldots, K\}^n$,
where $K$ need not be known in advance to the algorithm.
Therefore, bounds of the form $\cO(\E[|\mct^T|] \log n)$ also hold in the multiclass setting.

On the lower bound side, \cite{CGV09b} contains an argument showing (for the $K = 2$ class case)
that for any $\phi \geq 0$ a labeling $\by$ exists such that any algorithm will make at least 
$\phi/2$ mistakes while $\E[|\mct^T|] < \phi$.
In short, $\Omega(\E[|\mct^T|])$ is also a lower bound on the number of mistakes
in the class prediction problem on graphs.  
When combined with Theorem \ref{eq:dmup} in Section \ref{sec:corr},
the above results immediately yield upper and lower bounds for the similarity prediction problem
over graphs.
\begin{proposition}\label{p:boundsimilarity}
Let $(G,\by)$ be a labeled graph, and $T$ be a random spanning tree of $G$.
Then an algorithm exists for the similarity prediction problem on $G$ 
whose expected number of mistakes $\E[M]$ satisfies\ 
\(
\E[M] = \cO(\E[|\mct^T(\by)|]\,\log K\,\log n)~.
\)
Moreover, for any $\phi \geq 0$ a $K$-class labeling $\by$ exists such that 
any similarity prediction algorithm on $G$ will make at least $\phi/2-K$ 
mistakes while $\E[|\mct^T|] < \phi$.
\end{proposition}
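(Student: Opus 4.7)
The plan is to combine three ingredients already laid out in the paper: a tree-based online class-prediction bound, the binary-to-$K$-class lifting described in Appendix A, and Theorem~\ref{thm:doubleequiv} that translates between classification and similarity prediction.

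For the upper bound, I would first take $T$ to be a uniformly random spanning tree of $G$ and run (on $T$) the best available tree-based online class predictor, which achieves a mistake bound of the form $\cO\bigl(|\mct^T(\by)|\,(1+\log(n/|\mct^T(\by)|))\bigr)$ in the binary setting, as recalled in Section~\ref{sec:classgraph}. Applying the binary-to-$K$-class reduction from Appendix~\ref{ssa:missingprospectus} only blows this up by a constant factor, so for any $K$-class labeling $\by$ we obtain a (deterministic, given $T$) class-prediction algorithm with mistake bound $\cO\bigl(|\mct^T(\by)|\,\log n\bigr)$. Averaging over the random draw of $T$ and using that the function $x \mapsto x(1+\log(n/x))$ is concave (so Jensen's inequality goes the right way), this gives an expected classification mistake bound of $\cO\bigl(\E[|\mct^T(\by)|]\,\log n\bigr)$ simultaneously against any labeling. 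Finally, I would plug this class-prediction algorithm into the reduction of Theorem~\ref{thm:doubleequiv}, equation~\eqref{eq:dmup}, producing a similarity algorithm whose expected mistake count is at most $5\log_2 K$ times the classification bound, yielding $\E[M] = \cO\bigl(\E[|\mct^T(\by)|]\,\log K\,\log n\bigr)$ as required.

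For the lower bound, I would invoke the $\Omega(\E[|\mct^T|])$ lower bound of~\cite{CGV09b} for binary class prediction on graphs: for every $\phi \geq 0$ there is a two-class labeling $\by$ with $\E[|\mct^T(\by)|] < \phi$ on which every online class predictor makes at least $\phi/2$ mistakes. Since a two-class labeling is a special case of a $K$-class labeling, this labeling is admissible. Now I would contrapose equation~\eqref{eq:dsimtoclass} of Theorem~\ref{thm:doubleequiv}: if some similarity algorithm $A_s$ made strictly fewer than $\phi/2 - K$ mistakes on every similarity sequence consistent with $\by$, the constructed classifier $A_c$ would have $\mb_{A_c}(\by) < \phi/2$, contradicting the class-prediction lower bound. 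Hence at least $\phi/2 - K$ mistakes are unavoidable for similarity prediction on this same labeling, which has $\E[|\mct^T(\by)|] < \phi$.

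The main obstacle I anticipate is a small bookkeeping point rather than a deep difficulty: ensuring that the random-spanning-tree expectation commutes cleanly with the concave $|\mct^T|\log(n/|\mct^T|)$ term and that the resulting bound holds uniformly in $\by$ (so that the adversary cannot tailor $\by$ to a particular $T$). The standard device is to draw $T$ before the sequence is revealed but independently of $\by$, and then apply Jensen's inequality to the concave mistake bound; this is routine but worth stating carefully. Everything else is a direct chaining of Theorem~\ref{thm:doubleequiv}, the Appendix~\ref{ssa:missingprospectus} reduction, and the known tree-based upper and lower bounds.
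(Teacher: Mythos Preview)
Your proposal is correct and follows essentially the same route as the paper: chain the tree-based $K$-class classification bound $\cO(|\mct^T|\log n)$ (via the binary-to-$K$-class lift of Appendix~\ref{ssa:missingprospectus}) with equation~\eqref{eq:dmup} of Theorem~\ref{thm:doubleequiv} for the upper bound, and contrapose equation~\eqref{eq:dsimtoclass} against the $\phi/2$ class-prediction lower bound of~\cite{CGV09b} for the lower bound. One minor simplification: you do not actually need Jensen's inequality, since $|\mct^T|(1+\log(n/|\mct^T|)) \le |\mct^T|(1+\log n)$ already holds pointwise whenever $|\mct^T|\ge 1$, so taking expectations is linear.
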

The upper bound above refers to a computationally inefficient algorithm and,
clearly enough, more direct version space arguments would lead to similar results.
Section~\ref{s:efficient} contains a more efficient approach to similarity prediction
over graphs. 

To close this section, we observe that the upper bounds 
on class predictions of the form $\cO(\E[|\mct^T|] \log n)$
taken from \cite{HLP09,CGVZ10} are essentially relying on linearizing the 
graph $G$ into a path graph, and then predicting optimally on it via an efficient
Bayes classifier (aka Halving Algorithm, e.g., \cite{litt88}). One might wonder
whether a similar approach would directly apply to the similarity prediction problem.  
We now show that exact computation of the probabilities of the Bayes classifier for a path 
graph is \#P-complete under similarity feedback.

The {\sc Ising} distribution over graph labelings ($\by\in\{1,2\}^n$) is defined as $p(\by) \propto 2^{-\beta|\cut^G(\by)|}$.   
Given a set of vertices and associated labels, 
the marginal distribution at each vertex can be computed in linear time when the graph is a path (\cite{pearlOrigBp82}). 
In~\cite{HLP09} this simple fact was exploited to give an efficient class prediction algorithm by a particular linearization of a graph to a path graph.   
The equivalent problem in similarity prediction requires us to compute marginals given a set of pairwise constraints.  The following theorem shows that computing the partition function (and hence the relevant marginals) of the {\sc Ising} distribution on a path with pairwise label constraints is  \#P-complete.
\begin{theorem}\label{t:hardness}
Computing the partition function of the (ferromagnetic) {\sc Ising} model on a path 
graph with pairwise constraints is  \#P-complete,
where an {\sc Instance} is an $n$-vertex path graph $P$, a set of pairs $\cC \subset \{1,\ldots,n\}^2$, 
and a natural number, $\beta$, presented in unary, 
and the desired {\sc Output} 
is the value of the partition function, \ \
\(
Z_P(\cC,\beta) :=  \sum_{\by\in\{1,2\}^n : \{(y_i = y_j)\}_{(i,j)\in\cC}}2^{-\beta|\cut^P(\by)|}\, .
\)
\end{theorem}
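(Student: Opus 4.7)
The plan is to prove $\#$P-completeness by (i) showing membership via a counting interpretation of the scaled partition function, and (ii) reducing from the Ising partition function on a general (multi)graph, whose $\#$P-hardness is classical. The key combinatorial idea for the reduction is to encode an arbitrary graph $G$ as a path by listing its vertices along an Eulerian traversal of the doubled multigraph $2G$, using the pairwise constraints $\cC$ to identify all path-copies of each vertex of $G$.

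For membership in $\#$P, I would observe that $2^{\beta(n-1)}\,Z_P(\cC,\beta)$ is a nonnegative integer: it counts pairs $(\by,f)$, where $\by$ is a constraint-satisfying labeling of the path and $f$ assigns to each \emph{non-cut} path-edge a $\beta$-bit string. Since $\beta$ is presented in unary, witnesses have polynomial size and verification is immediate.

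For hardness, given an input graph $G=(V,E)$ with $|V|=N$ and $|E|=M$ (WLOG connected), I would form the multigraph $G^{*}=2G$ by doubling each edge. All degrees in $G^{*}$ are even, so $G^{*}$ admits an Eulerian circuit $u_{0},u_{1},\ldots,u_{2M}$ with $u_{0}=u_{2M}$, computable in polynomial time. I construct the path $P$ on $2M+1$ vertices $v_{0},v_{1},\ldots,v_{2M}$ and set $\cC=\{(v_{k},v_{\ell}):u_{k}=u_{\ell}\}$. The constraints collapse the path vertices into $N$ ``type''-blocks in bijection with $V(G)$, so constraint-satisfying labelings of $P$ are exactly parametrized by $\by\in\{1,2\}^{N}$. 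For any such $\by$,
\[
|\cut^{P}(\by)|\;=\;\sum_{k=0}^{2M-1}[y_{u_{k}}\ne y_{u_{k+1}}]\;=\;\sum_{e\in E(G^{*})}[y_{e_{1}}\ne y_{e_{2}}]\;=\;2\,|\cut^{G}(\by)|,
\]
because the Eulerian circuit traverses each edge of $G$ exactly twice. Hence $Z_P(\cC,\beta)=Z_G(2\beta)$, giving a polynomial-time reduction that preserves unary $\beta$.

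The main obstacle I foresee is pinning down $\#$P-hardness of the ferromagnetic Ising partition function on general graphs at the specific (even, unary) values of $\beta$ the reduction produces, since a handful of parameter regimes (e.g., planar Ising without external field) are in $\mathrm{P}$. A robust workaround is to compose with a Vandermonde interpolation: compute $Z_P(\cC,\beta)$ for $\beta=1,2,\ldots,2M$ (each polynomial-size in unary) to obtain $Z_G(2),Z_G(4),\ldots,Z_G(4M)$, and invert the resulting linear system in the unknowns $c_{k}=|\{\by:|\cut^{G}(\by)|=k\}|$ to recover the full cut-size distribution of $G$. In particular, this yields the number of \emph{maximum} cuts of $G$, which is well-known to be $\#$P-hard to compute even for restricted graph classes.
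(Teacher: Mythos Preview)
Your reduction is essentially the paper's, with a cleaner packaging. The paper also reduces from the general-graph Ising partition function (citing Jerrum--Sinclair) and also arranges that every edge of $G$ is represented by exactly two consecutive path edges, yielding the identical relation $Z_P(\cC,\beta)=Z_G(2\beta)$. The only difference is cosmetic: the paper builds the walk by first doing a depth-first traversal of a spanning tree $T$ (so each tree edge is visited twice) and then, for each non-tree edge, splicing a two-step detour into the path; you observe that this is nothing other than an Eulerian circuit of the doubled multigraph $2G$ and invoke it directly. Your formulation is more transparent and avoids the case split between tree and non-tree edges.

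Two places where your write-up goes beyond the paper: (i) you give an explicit membership-in-$\#$P argument via the counting interpretation of $2^{\beta(n-1)}Z_P(\cC,\beta)$, which the paper omits; (ii) you flag the parity issue that the reduction only hits even temperatures $2\beta$, and you patch it with a Vandermonde interpolation recovering the full cut-size distribution (and hence $\#\textsc{MaxCut}$). The paper simply asserts $Z_G(2\beta)=Z_P(\cC,\beta)$ and appeals to Jerrum--Sinclair without discussing whether restricting to even $\beta$ preserves hardness; your interpolation step makes the argument self-contained. Both fixes are sound, so your proposal is correct and, if anything, more careful than the original.
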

%
Thus computing the {\em exact} marginal probabilities on even a path 
graph will be infeasible (given the hardness of  \#P).   
As an alternative, in the following section we discuss the application of the 
Matrix Perceptron and Matrix Winnow algorithms to similarity prediction.

\newcommand{\htheta}{\wh{\theta}}

\vspace{-0.1in}
\subsection{Similarity prediction on graphs}\label{sec:hs}
\vspace{-0.05in}
In Algorithm~\ref{a:2} we give a simple application of the Matrix Winnow (superscript ``w")
and Perceptron (superscript ``p") algorithms to similarity prediction on graphs. The key aspect of the construction 
(common to many methods in metric learning) is the creation of rank one 
matrices which correspond to similarity ``instances'' (see~\eqref{eq:siminstances}).
\begin{algorithm2e}[t]
\SetKwSty{textrm}
\SetKwFor{For}{{\bf For}}{}{}
\SetKwIF{If}{ElseIf}{Else}{if}{}{else if}{else}{}
\SetKwFor{While}{while}{}{}
{\bf Input:}~ Graph $G = (V,E)$, $|V|= n$, with Laplacian $L = \Psi^\top\Psi$, and
$R := \max_{(i,j)\in V^2} R^G_{i,j}$;\\
{\bf Parameters:} 
Perceptron threshold $\htheta^{\mbox{\scriptsize p}} = R^2 $; \ \  
Winnow threshold $\htheta^{\mbox{\scriptsize w}} = \frac{\eta}{e^{\eta}-e^{-\eta}}\,\frac{1}{R\,|\Phi^G|} $, 
Winnow learning rate $\eta = 1.28$; \\ 
{\bf Initialization:} $W^{\mbox{\scriptsize p}}_{0} = \bzero\in \R^{m\times m};
\quad W^{\mbox{\scriptsize w}}_{0} = \frac{1}{m}\,I \in \R^{m\times m}$;\\
\For{ $t=1,2,\ldots, T$ :}
{
\vspace{-0.1in}
{
\begin{itemize}
\item Get pair of vertices $(i_t,j_t) \in V^2$, and construct similarity instances,
\vspace{-0.05in}
\begin{equation}\label{eq:siminstances}
X^{\mbox{\scriptsize p}}_t = (\Psi^+)^\top(\be_{i_{t}}-\be_{j_{t}})(\be_{i_t}-\be_{j_t})^\top\Psi^+;\ \
X^{\mbox{\scriptsize w}}_t = 
\frac{(\Psi^+)^\top(\be_{i_{t}}-\be_{j_{t}})(\be_{i_t}-\be_{j_t})^\top \Psi^+}{(\be_{i_t}-\be_{j_t})^\top L^{+}(\be_{i_t}-\be_{j_t})}\,;
\end{equation}
\vspace{-0.25in}
\item Predict: $\hat{y}^{\mbox{\scriptsize p}}_t = \id{\trace((W^{\mbox{\scriptsize p}}_{t-1})^{\top} X^{\mbox{\scriptsize p}}_t)>\htheta^{\mbox{\scriptsize p}}}$;\quad\quad $\hat{y}^{\mbox{\scriptsize w}}_t = \id{\trace((W^{\mbox{\scriptsize w}}_{t-1})^{\top} X^{\mbox{\scriptsize w}}_t)>\htheta^{\mbox{\scriptsize w}}}$;
\vspace{-0.05in}
\item Observe $y_t \in \{0,1\}$ and, if mistake ($y_t \neq {\hat y_t}$), update
\[ 
W^{\mbox{\scriptsize{p}}}_t \leftarrow W^{\mbox{\scriptsize{p}}}_{t-1}
+ 
(y_t - \hat{y}^{\mbox{\scriptsize p}}_t)\,X^{\mbox{\scriptsize p}}_t;
\quad\quad 
\log W^{\mbox{\scriptsize w}}_t \leftarrow \log W^{\mbox{\scriptsize w}}_{t-1}
+ 
\eta\,(y_t - \hat{y}^{\mbox{\scriptsize w}}_t)\,X^{\mbox{\scriptsize w}}_t~. 
\]
\end{itemize}
}
}
\vspace{-0.33in}
\caption{Perceptron and Matrix Winnow algorithms on a graph\label{a:2}}
\vspace{-0.03in}
\end{algorithm2e}
%
We then use the standard analysis of the Perceptron~\cite{Novikoff62} and Matrix Winnow~\cite{w07} algorithms 
with appropriate thresholds to obtain Proposition~\ref{prop:percwin}.  
A key observation is that the squared Frobenius norm of the (un-normalized) instance matrices is bounded by the squared resistance diameter of the graph, and the squared Frobenius norm of the (un-normalized) ``comparator'' 
matrix is bounded by the cut-size squared $|\Phi^G|^2$.
%
\vspace{-0.1in}
\begin{proposition}\label{prop:percwin}
Let $(G,\by)$ be a labeled graph and let $\Psi$ be the (transposed) incidence matrix associated
with the Laplacian of\ $G$. Then, if we run the Matrix Winnow and Perceptron algorithms 
with similarity instances constructed from $\Psi$, we have the following
mistake~bounds:
\vspace{-0.05in}
\[
M^{\mbox{\sc w}} = \cO\Bigl(|\Phi^G| \max_{(i,j)\in V^2} \er_{i,j}^G \log n \Bigl)\ \ \text{ and }\ \  
M^{\mbox{\sc p}} = \cO\Bigl(|\Phi^G|^2 \max_{(i,j)\in V^2} (\er_{i,j}^G)^2 \Bigl)~.
\]
\end{proposition}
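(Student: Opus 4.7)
The plan is to obtain both bounds from the standard Perceptron~\cite{Novikoff62} and Matrix Winnow~\cite{w07} mistake analyses, once a good ``cut-based'' comparator is exhibited. Let $\mathbf{1}_k\in\{0,1\}^n$ denote the $0/1$ indicator of class $k$ under the labeling $\by$, and take the positive semidefinite comparator
\[
W^* \;=\; \sum_{k=1}^K (\Psi\mathbf{1}_k)(\Psi\mathbf{1}_k)^\top \;\in\; \R^{m\times m}.
\]
The key observation is that $\Psi^+\Psi$ is the orthogonal projection onto $\one^\perp$, which already contains every $\be_i-\be_j$. Applying the cyclic property of the trace and pulling this projection through yields
\[
\trace\bigl((W^*)^\top X^{\mbox{\scriptsize p}}_t\bigr) \;=\; \sum_{k=1}^K\bigl([\mathbf{1}_k]_{i_t}-[\mathbf{1}_k]_{j_t}\bigr)^2 \;=\; 2\,\id{y_{i_t}\neq y_{j_t}},
\]
and, by the same token, $\trace((W^*)^\top X^{\mbox{\scriptsize w}}_t) = 2/R^G_{i_t,j_t}$ on dissimilar pairs and $0$ on similar ones. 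Thus $W^*$ serves as a single comparator that linearly separates similar from dissimilar in both the Perceptron and the Winnow instance spaces.

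Next, I would estimate the quantities driving the two bounds. Each $X^{\mbox{\scriptsize p}}_t = aa^\top$ with $\|a\|^2 = R^G_{i_t,j_t}$, whence $\|X^{\mbox{\scriptsize p}}_t\|_F^2 = (R^G_{i_t,j_t})^2\le R^2$, and $X^{\mbox{\scriptsize w}}_t$ has unit spectral norm. For the comparator, $\trace(W^*) = \sum_k\|\Psi\mathbf{1}_k\|^2 = \sum_k |\Phi^G_k| = 2|\Phi^G|$, and $\|W^*\|_F^2 = \sum_{k,k'}(\mathbf{1}_k^\top L\mathbf{1}_{k'})^2$, where $|\mathbf{1}_k^\top L \mathbf{1}_k| = |\Phi^G_k|$ and, for $k\neq k'$, $|\mathbf{1}_k^\top L\mathbf{1}_{k'}|$ equals the number of edges between classes $k,k'$; both the diagonal sum $\sum_k |\Phi^G_k|^2$ and the off-diagonal sum are $O(|\Phi^G|^2)$ using $\sum_k |\Phi^G_k| = 2|\Phi^G|$ together with $\sum_i a_i^2 \le (\sum_i a_i)^2$ for $a_i\ge 0$.

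Plugging these into the classical analyses concludes the argument. For the Perceptron, scaling $W^*$ by $R^2$ so as to match the algorithm's threshold $\htheta^{\mbox{\scriptsize p}}=R^2$ produces evaluations in $\{0,2R^2\}$, i.e.\ margin $R^2$ on each side; the Novikoff argument (absorbing the threshold via the usual one-coordinate augmentation) then delivers
\[
M^{\mbox{\sc p}} \;=\; O\!\left(\frac{\|R^2 W^*\|_F^2 \cdot \max_{(i,j)}(R^G_{i,j})^2}{(R^2)^2}\right) \;=\; O\!\left(|\Phi^G|^2\,R^2\right).
\]
For Matrix Winnow, I would apply Warmuth's quantum-relative-entropy analysis~\cite{w07} to the trace-normalized comparator $W^*/\trace(W^*)$, with unit-spectral-norm instances and additive margin $\Omega(1/(R|\Phi^G|))$ around $\htheta^{\mbox{\scriptsize w}}$; the initial-to-final potential gap is $O(\log m)=O(\log n)$, and the calibrated choice $\eta=1.28$ together with the factor $\eta/(e^\eta-e^{-\eta})$ in $\htheta^{\mbox{\scriptsize w}}$ makes the per-mistake progress $\Omega(1/(R|\Phi^G|))$, giving $M^{\mbox{\sc w}} = O(|\Phi^G|\,R\,\log n)$.

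The main obstacle is the Matrix Winnow step: the dissimilar-pair value $2/R^G_{i_t,j_t}$ varies with $t$, so I must verify that $\htheta^{\mbox{\scriptsize w}}$ uniformly separates it from $0$ and that the resulting per-step drop in the quantum relative entropy matches exactly the closed-form factor $\eta/(e^\eta-e^{-\eta})$ appearing in the threshold, without losing an unnecessary extra factor of $R$. The Perceptron side, by contrast, reduces to routine rank-one trace algebra once $W^*$ is in hand.
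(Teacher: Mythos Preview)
Your proposal is correct and follows essentially the same route as the paper: the comparator $W^*=\sum_k(\Psi\mathbf{1}_k)(\Psi\mathbf{1}_k)^\top$ is exactly the paper's $U=\Psi\bigl(\sum_k\bu_k\bu_k^\top\bigr)\Psi^\top$, the projection fact about $\Psi^+\Psi$ is the same as the paper's observation that $\Psi^+\Psi\bu_k=\bu_k+c\one$, and the Frobenius/trace estimates (diagonal plus off-diagonal block-cut counts, both $O(|\Phi^G|^2)$) coincide. Your flagged ``obstacle'' for Winnow is not a real one, since $\trace\bigl((W^*/\trace W^*)^\top X^{\mbox{\scriptsize w}}_t\bigr)=1/(R^G_{i_t,j_t}|\Phi^G|)\ge 1/(R|\Phi^G|)$ uniformly, which is precisely the margin the paper (implicitly) uses.
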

A severe drawback of both these algorithms is that on a generic graph, initilization 
requires computing a pseudo-inverse (typically cubic time), and furthermore the update of Matrix 
Winnow requires a cubic-time computation of an eigendecomposition (to compute matrix exponentials) 
on each mistaken trial.\footnote
{
Additionally, there is a tuning issue related to Matrix Winnow, since the 
threshold $\htheta^{\mbox{\scriptsize w}}$
depends on the (unknown) cut-size.
}  
In the following section, we focus on a construction based on a graph approximation 
for 
which we develop an efficient implementation of the Perceptron algorithm 
which will require only poly-logarithmic time per round.


\section{Efficient similarity prediction on graphs}\label{s:efficient}
%
Relying on the notation of Section~\ref{sec:spg},
we turn to efficient similarity prediction on graphs.
We present adaptations of Matrix Winnow and Matrix Perceptron to the case when the original
graph $G$ is sparsified through a linearized and rebalanced random spanning tree of $G$. 
This sparsification technique, called Binary Support Tree (BST) in~\cite{HLP09},
brings the twofold advantage of yielding improved mistake bounds and
faster prediction algorithms. More specifically, the use of a BST replaces 
the (perhaps very large) resistance diameter term $\max_{(i,j)\in V^2} R^G_{i,j}$ in the mistake bounds of
Proposition \ref{prop:percwin} 
by a logarithmic term, the other term in the mistake bound becoming (when dealing with the expected 
number of mistakes) only a logarithmic factor larger than the (often far smaller) sum of 
the resistance-weighted cut-sizes in a spanning tree.
Moreover, when combined with the Perceptron algorithm, a BST allows us to develop a 
very fast implementation whose running time per round is poly-logarithmic in $n$, rather than
cubic, as in Matrix Winnow-like algorithms.

Recall that a uniformly random spanning tree of an unweighted graph
can be sampled 
in expected time $\mathcal{O}(n\ln n)$ for ``most'' graphs~\cite{Bro89}.
Using the nice algorithm of \cite{Wil96}, the expected time reduces 
to $\mathcal{O}(n)$ ---see also the work of~\cite{AAKKLT08}. 
However, all known techniques take expected time $\Theta(n^3)$ in certain pathological cases.
%

In a nutshell, a BST $B$ of $G$ is
a full balanced binary tree whose leaves correspond to the vertices in\footnote
{
We assume w.l.o.g. that $n = |V|$ is a power of 2. Otherwise, we may add dummy ``leaves".
}  
$V$. In order to construct $B$ from $G$, we first extract a random spanning tree $T$ of $G$,
then we visit $T$ through a depth-first visit, order its vertices according to this visit
eliminating duplicates (thereby obtaining a path graph $P$), and finally we build $B$ on top 
of $P$. Since $B$ has $2n-1$ vertices, we extend the class labels from leaves to internal vertices 
by letting, for each internal vertex $i$ of $B$, $y_i$ be equal to the class label of $i$'s 
left child. Figure~\ref{f:bst} illustrates the process.
\begin{figure}[t!]
\centering
\vspace{-0.35in}
\includegraphics[width=0.75\textwidth]{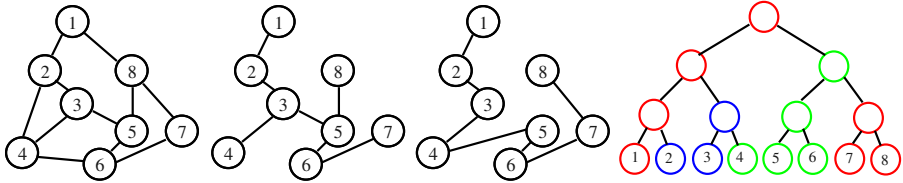}
\vspace{-4.9in}
\caption{{\small {\bf From left to right:} The graph $G$; a random spanning tree $T$ of $G$ 
(note that the vertices are numbered by the order of a depth-first visit of $T$, starting from root vertex 1;
the path graph $P$ which follows the order of the depth-first visit of $T$;
the BST built on top of $P$. Notice how the class labels of the 8 vertices in $V$ (corresponding to the
three colors) are propagated upwards.}
}
\label{f:bst}
\end{figure}
A simple adaptation of~\cite{HLP09} (Section 6 therein) shows that for any class $k = 1, \ldots, K$ we have
$|\Phi^B_k|\leq 2\,|\Phi^T_k|\,\log_2 n$.
With the above handy, we can prove the following bounds (see Appendix~\ref{ssa:missingeff} for further details). 
%
\begin{theorem}\label{t:winnowandperceptron}
Let $(G,\by)$ be a labeled graph, $T$ be a random spanning tree of $G$,
$B$   
be the corresponding BST, and $\Psi_B$ be the (transposed) incidence matrix associated
with $B$.
\begin{enumerate}
\item If we run Matrix Winnow with similarity instances constructed from $\Psi_B$
(see Algorithm \ref{a:2}) then
the expected number of mistakes $\E[M]$ on $G$ satisfies~\(
\E[M] = \cO\left(\rwcs\,\log^3 n\right),
\)
\item and if we run the Matrix Perceptron algorithm with similarity instances constructed from $\Psi_B$ then\
\(
\E[M] = \cO\left(\rwcs^2\,\log^4 n\right)~,
\)
\end{enumerate}
where we denote the resistance-weighted cut-size as $\rwcs =\E[|\mct^T|]= \sum_{(i,j)\in\Phi^G} \er^G_{i,j}$.
\end{theorem}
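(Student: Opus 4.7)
The plan is to reduce Theorem~\ref{t:winnowandperceptron} to Proposition~\ref{prop:percwin} applied with the BST $B$ taking the role of $G$, and then take expectation over the uniform random spanning tree $T$ underlying $B$. Two ingredients make this substitution favorable. First, the resistance diameter of $B$ is small: since $B$ is a balanced binary tree on $2n-1$ vertices whose leaves are $V$, any two vertices are joined by a unique tree-path of length at most $2\log_2 n$, and because effective resistance in a tree equals this path length, $\max_{(i,j)\in V^2}\er^B_{i,j} = \cO(\log n)$. Second, the cut-size of $B$ is controlled: the adaptation of~\cite{HLP09} stated just before the theorem gives $|\mct_k^B|\le 2|\mct_k^T|\log_2 n$ for every class $k$; summing over $k$ and using $\sum_k |\mct_k^G|=2|\mct^G|$ yields $|\mct^B|\le 2|\mct^T|\log_2 n$.

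Plugging these into Proposition~\ref{prop:percwin} applied to $B$ (noting that $B$ has $2n-2 = \cO(n)$ edges, so the $\log n$ factor in the Winnow bound is preserved) gives, for \emph{every} realization of $T$, the deterministic bounds
\[
M^{\mbox{\sc w}} = \cO\bigl(|\mct^T|\,\log^3 n\bigr), \qquad M^{\mbox{\sc p}} = \cO\bigl(|\mct^T|^2\,\log^4 n\bigr).
\]

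For the Winnow part we simply take expectation over $T$. By Kirchhoff's theorem $\Pr[(i,j)\in T] = \er^G_{i,j}$ for every edge $(i,j)\in E$, so $\E[|\mct^T|]=\sum_{(i,j)\in\mct^G} \er^G_{i,j}=\rwcs$, and linearity of expectation yields $\E[M^{\mbox{\sc w}}] = \cO(\rwcs\,\log^3 n)$.

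The main obstacle is the Perceptron bound, where we need $\E[|\mct^T|^2]=\cO(\rwcs^2)$ rather than $\cO(\E[|\mct^T|]^2)$, since in general $\E[X^2]\ge (\E[X])^2$. The key tool is the \emph{negative correlation} property of uniform random spanning trees (Feder--Mihail): for any two distinct edges $e,f$ of $G$, $\Pr[e\in T,\,f\in T]\le \Pr[e\in T]\cdot\Pr[f\in T]$. Expanding
\[
|\mct^T|^2=\sum_{e\in\mct^G}\id{e\in T}+\sum_{e\neq f\in\mct^G}\id{e\in T}\id{f\in T},
\]
taking expectations, and applying negative correlation to the pairwise terms gives $\E[|\mct^T|^2]\le \rwcs + \rwcs^2 = \cO(\rwcs^2)$, from which $\E[M^{\mbox{\sc p}}] = \cO(\rwcs^2\,\log^4 n)$ follows. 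Everything else is routine; this negative-correlation step is the one non-obvious ingredient of the argument.
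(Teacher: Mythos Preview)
Your proposal is correct and follows essentially the same route as the paper: apply Proposition~\ref{prop:percwin} to $B$, use that $B$ has diameter $\cO(\log n)$ and $|\mct^B|\le 2|\mct^T|\log_2 n$, then take expectation over $T$ using linearity (Winnow) and negative correlation of edge indicators in uniform spanning trees (Perceptron). The paper packages the negative-correlation step into a separate lemma (Lemma~\ref{l:expected}) and explicitly justifies $\rwcs+\rwcs^2=\cO(\rwcs^2)$ by observing that any spanning tree of a graph with a nonempty cut must itself contain a cut edge, so $\rwcs=\E[|\mct^T|]\ge 1$; you leave that implicit, but otherwise the arguments coincide.
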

%
The bound for Matrix Winnow is optimal up to a $\log^3 n$ factor --- compare to the lower
bound in Proposition \ref{p:boundsimilarity}. However, this tight bound is obtained at the cost 
of having an algorithm which is $\cO(n^3)$ per round, even when run on a tree. This is because
matrix exponentials require storing and
updating a full SVD of the algorithm's weight matrix at each round, thereby 
making this algorithm highly impractical when $G$ is large. 
On the other hand, the Perceptron bound is significantly suboptimal (due to its dependence on the squared
resistance-weighted cut-size), but it has the invaluable advantage of lending itself to a very efficient implementation: 
Whereas a naive implementation would lead to an $\cO(n^2)$ running time per round, we now show that 
a more involved implementation exists which takes only $\cO(\log^2 n)$, yielding an exponential
improvement in the per-round running time.

\subsection{Implementing Matrix Perceptron on BST}
%
The algorithm operates on the BST $B$ 
by maintaining a $(2n-1)\times(2n-1)$ symmetric matrix $F$ with integer entries initially set to zero.
At time $t$, when receiving the pair of leaves $(i_t, j_t)$, the algorithm constructs $\cP_t$, 
the (unique) path in $B$ connecting $i_t$ to $j_t$. Then the prediction $\hat{y}_{i_t,j_t} \in \{0,1\}$ 
is computed as 
%
\begin{equation}\label{e:predictionstep}
\hat{y}_{i_t,j_t} =
\begin{cases}
1 & {\mbox{if }} \sum_{\ell,\ell' \in \cP_t} F_{\ell,\ell'} \geq 4\log^2 n,\\
0 & {\mbox{otherwise}}~.
\end{cases}
\end{equation}
%
Upon receiving label $y_{i_t,j_t}$, the algorithm 
updates $F$ as follows. First of all, the algorithm is mistake driven, so an update
takes place only if $y_{i_t,j_t} \neq \hat{y}_{i_t,j_t}$. Let 
$\cN_t$ be the set of neighbors of the vertices in $\cP_t$, and define 
$\cS_t:=\cN_t\setminus(\cP_t\setminus\{i_t,j_t\})$.
We recursively assign integer tags $f_t(\ell)$ to vertices $\ell \in \cN_t$ as follows:\ \
1.  For all $\ell\in\cP_t$, if $\ell$ is the $s$-th vertex in $\cP_t$ then we set $f_t(\ell)=s$;\ \
2.  For all $\ell\in\cN_t\setminus\cP_t$, let $n_{\ell}$ be the (unique) neighbor of $\ell$ 
that is contained in $\cP_t$. Then we set $f_t(\ell)=f_t(n_{\ell})$.\ \
We then update $F$ on each pair $(\ell,\ell')\in\cS_t^2$ as 
\begin{equation}\label{e:updatestep}
F_{\ell,\ell'}\leftarrow F_{\ell,\ell'} + (2y_{i_t,j_t}-1)\,(f_t(\ell)-f_t(\ell'))^2~.
\end{equation}
Figure \ref{f:table} illustrates the process.
\begin{figure}[t!]
\centering
\vspace{-0.5in}
\includegraphics[width=0.9\textwidth]{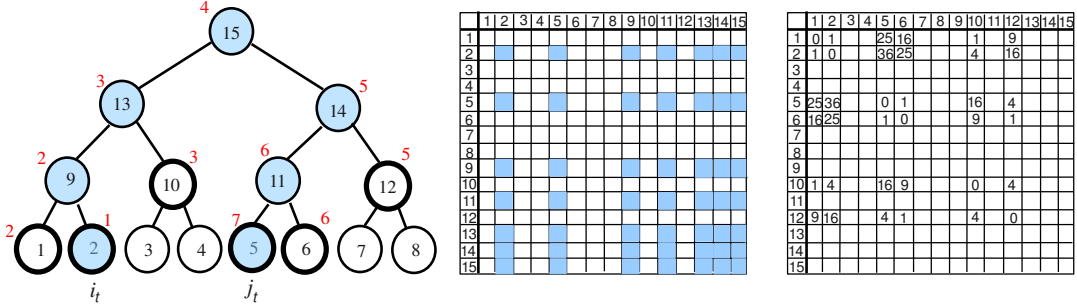}
\vspace{-5.45in}
\caption{{\small Matrix Perceptron algorithm at time $t$ 
with $i_t=2$ and $j_t=5$. 
{\bf Left:} The BST. Light blue vertices are those in $\cP_t$. 
Thick-bordered vertices are those in $\cS_t$. The $f_t$ tags are the red numbers near the 
involved vertices (i.e., those in $\cP_t \cup \cS_t$).
{\bf Middle:} The matrix $F$. In light blue are the entries of $F$ that are summed over 
in (\ref{e:predictionstep}). 
{\bf Right:} The matrix $F$, where numbers are the 
values $(f_t(\ell)-f_t(\ell'))^2$ that are added to ($y_{i_t,j_t}=1, \hat{y}_{i_t,j_t}=0$) 
or subtracted from ($y_{i_t,j_t}=0, \hat{y}_{i_t,j_t}=1$) the respective components of $F$ during 
the update step (\ref{e:updatestep}).} \label{f:table}}
\end{figure}
The following theorem is the main technical result of this section. Its involved proof 
is given in Appendix \ref{ssa:missingeff}.
%
\begin{theorem}\label{t:equivalence}
Let $B$ be a BST of a labeled graph $(G,\by)$ with $|V| = n$.
Then the algorithm described by (\ref{e:predictionstep}) and (\ref{e:updatestep}) is equivalent to
Matrix Perceptron run with similarity instances constructed from $\Psi_B$.
Moreover, the algorithm takes $\cO(\log^2 n)$ per trial, and
there exists an adaptive representation of $F$ with an initialisation time of only $\cO(n)$ (rather than~$\cO(n^2)$).
\end{theorem}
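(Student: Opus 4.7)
My plan has three stages: (i) reduce the stated prediction/update rule on $F$ to the Matrix Perceptron on $\Psi_B$, (ii) bound the per-trial cost, and (iii) design a lazy representation that keeps the initialisation cost at $\cO(n)$. For stage (i), the key fact is that for a tree $p_t := (\Psi_B^+)^\top(\be_{i_t}-\be_{j_t})$ is the signed $\pm 1$-indicator of the unique path $\cP_t$ in $B$ from leaf $i_t$ to leaf $j_t$, so the similarity instance is the rank-one matrix $X_t=p_t p_t^\top$. After $t-1$ mistake-driven updates the Matrix Perceptron's weight is $W_{t-1} = \sum_{s\in M_{t-1}}(2 y_{i_s,j_s}{-}1)\,p_s p_s^\top$ and its discriminant is $\trace(W_{t-1}^\top X_t)=p_t^\top W_{t-1}p_t = \sum_{s\in M_{t-1}}(2y_{i_s,j_s}{-}1)(p_t^\top p_s)^2$. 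The equivalence then reduces to establishing, trial by trial, the identity
\[
\sum_{(\ell,\ell')\in(\cP_t\cap\cS_s)^2}\bigl(f_s(\ell)-f_s(\ell')\bigr)^2 \;=\; 2\,(p_t^\top p_s)^2,
\]
from which summation over mistake trials with signs $2y_{i_s,j_s}{-}1$ gives $\sum_{\ell,\ell'\in\cP_t}F_{\ell,\ell'}=2\,p_t^\top W_{t-1}p_t$. The stated threshold $4\log^2 n$ matches $R^2$ for the BST (whose resistance diameter is $R=2\log_2 n$) up to an overall factor of two that is absorbed into the reduction.

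\textbf{Proving the key identity.} I would first extend $f_s$ to a function $\pi_s:V(B)\to\{1,\dots,|\cP_s|\}$ by declaring $\pi_s(u)$ to be the position along $\cP_s$ of the unique vertex of $\cP_s$ nearest to $u$ in $B$. One checks from the definitions of $\cN_s,\cS_s$ that $\pi_s\equiv f_s$ on $\cS_s$. Next, for any adjacent pair $(u,u')$ of $B$, the difference $\pi_s(u')-\pi_s(u)$ vanishes unless the edge $(u,u')$ lies in $\cP_s$, in which case it is $\pm 1$ encoding relative orientation; telescoping along $\cP_t$ yields the lemma $p_t^\top p_s = \pi_s(j_t)-\pi_s(i_t)$. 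I would then partition $\cP_t$ into maximal sub-walks of constant $\pi_s$-value, which are separated exactly by the shared $\cP_s$-edges. Because $i_s,j_s$ are leaves of $B$ and so have no off-path neighbours, and because every internal $\cP_s$-vertex has at most one off-path neighbour in the degree-$\le 3$ BST, a case analysis shows that every intermediate phase is a singleton $\{v_q\}$ with $v_q\in\cP_s\setminus\{i_s,j_s\}$ (hence not in $\cS_s$), contributing nothing to $\cP_t\cap\cS_s$, whereas each of the first and last phases contributes exactly one representative in $\cP_t\cap\cS_s$ whose tag equals $\pi_s(i_t)$ and $\pi_s(j_t)$, respectively. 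The ordered-pair sum of squared differences of these two representatives is $2\bigl(\pi_s(i_t)-\pi_s(j_t)\bigr)^2 = 2\,(p_t^\top p_s)^2$, proving the identity. The main obstacle here is the careful boundary-case analysis (for instance, when $\cP_t$ and $\cP_s$ are edge-disjoint so there is only one phase, when $i_t\in\{i_s,j_s\}$ forces the first phase to collapse to a single vertex, or when the LCA of $\cP_s$ coincides with the root of $B$), which I would handle by verifying each case still fits the same two-representative template, with appropriate degeneracies when representatives coincide and the identity becomes the trivial $0=0$.

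\textbf{Runtime and adaptive representation.} For the per-trial cost, $\cP_t$ (together with $\cS_t$ and the tags $f_t$) can be built in $\cO(\log n)$ time by walking from $i_t$ and $j_t$ upwards to their LCA in $B$ and then collecting the at most one off-path neighbour per internal vertex; since $B$ is a full balanced binary tree of depth $\log_2 n$ with maximum degree $3$, we have $|\cP_t|,|\cS_t|=\cO(\log n)$. The prediction step~\eqref{e:predictionstep} sums over $|\cP_t|^2=\cO(\log^2 n)$ entries of $F$, and the update step~\eqref{e:updatestep} modifies $|\cS_t|^2=\cO(\log^2 n)$ entries; both reduce to $\cO(\log^2 n)$ hash operations. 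To avoid the $\Theta(n^2)$ cost of initialising a dense $(2n{-}1)\times(2n{-}1)$ matrix, I would store $F$ as a hash table keyed by unordered vertex pairs with default value $0$, so that untouched entries are implicit and the table itself needs only $\cO(1)$ initialisation beyond the $\cO(n)$ cost of constructing $B$ (Wilson's random spanning tree sampler, depth-first linearisation into a path, and promotion to a full binary tree by adding dummy leaves when $n$ is not a power of~$2$). Every hash lookup/insert is amortised $\cO(1)$, which yields the claimed $\cO(\log^2 n)$ per-trial cost and $\cO(n)$ initialisation cost.
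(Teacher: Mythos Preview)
Your proposal is correct and structurally parallel to the paper's own argument: both extend the tag function $f_s$ from $\cN_s$ to all of $V(B)$ by ``nearest vertex of $\cP_s$'' (your $\pi_s$ is the paper's $\tilde f_s$), both establish that $\cP_t\cap\cS_s$ has at most two elements whose tags equal the extended tags of the endpoints $i_t,j_t$, and both conclude the identity between $\sum_{\ell,\ell'\in\cP_t}F_{\ell,\ell'}$ and the Perceptron discriminant (the factor-of-two you flag is indeed present and harmless). Where you differ is in the mechanism for the key step: you identify $p_t=(\Psi_B^+)^\top(\be_{i_t}-\be_{j_t})$ directly as the signed edge-indicator of $\cP_t$ and obtain $p_t^\top p_s=\pi_s(j_t)-\pi_s(i_t)$ by a one-line telescoping along $\cP_t$, whereas the paper instead verifies the Laplacian identity $L\tilde f_s=\be_{j_s}-\be_{i_s}$ by a vertex-by-vertex case check and then invokes the null-space characterisation of $L$ to get $\tilde f_s=L^+(\be_{j_s}-\be_{i_s})+c\one$. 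Your route is more combinatorial and avoids the pseudoinverse detour; the paper's route makes the link to the similarity instance $X_t$ (which is defined through $\Psi^+$ and $L^+$) more explicit. For the $\cO(n)$-initialisable data structure, the paper does \emph{not} use hashing: it keeps, for each vertex $\ell$, an explicit subtree $B_\ell\subseteq B$ (initially just the root) storing the nonzero entries $F_{\ell,\ell'}$, and grows $B_\ell$ on demand along the root-to-leaf paths touched at trial $t$; this gives deterministic $\cO(\log^2 n)$ per trial without amortised-hash assumptions, which your hash-table variant trades for simplicity.
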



\section{The Unknown Graph Case}\label{s:unknown}
%
We now consider the case when the graph $G = (V,E)$ is {\em unknown} to the learner beforehand.
The graph structure is thus revealed incrementally as more and more pairs $(i_t,j_t)$ get produced 
by the adversary. 
A reasonable online 
protocol that includes progressive graph disclosure is the following.
At the beginning of round $t = 1$ the learner knows nothing about $G$, but the number of vertices $n$ 
--- prior knowledge of $n$ makes presentation easier, but could easily be   removed from this setting.  
In the generic round $t$, the adversary
presents to the learner both pair $(i_t,j_t) \in V\times V$ and a path within $G$ 
from $i_t$ to $j_t$. 
The learner is then compelled to predict whether or not the two vertices are similar.
Notice that, although the presented path may have cut-edges, there might be alternative paths
in $G$ connecting the two vertices with no cut-edges. The learner need not see them.
The adversary then reveals the similarity label $y_{i_t,j_t}$ in $G$, 
and the next round begins.
In this setting, 
the adversary has complete knowledge of $G$, and can decide to produce paths and 
place the cut-edges 
in an adaptive fashion. 
%
%
Notice that, because of the incremental disclosure of $G$, no such constructions 
as $\Psi$-based similarity instances and/or BST, as contained
in Section \ref{sec:hs} and Section \ref{s:efficient}, are immediately applicable.


As a simple warm-up, consider the case when $G$ is a tree
and the $K$ class label sets of vertices (henceforth called {\em clusters}) correspond to connected components 
of $G$. Figure \ref{f:2} (a) gives an example.
Since the graph is a tree,
the number of cut-edges equals $K-1$. We can associate with such a tree a linear-threshold
function vector $\bu = (u_1, \ldots, u_{n-1})^{\top} \in \{0,1\}^{n-1}$, where $u_i$
is 1 if and only if the $i$-th edge is a cut-edge. The ordering of edges within $\bu$
can be determined ex-post by first disclosure times. For instance,
if in round $t=1$ the adversary produces pair $(6,4)$ and path 
$6 \rightarrow 3 \rightarrow 1 \rightarrow 4$ (Figure \ref{f:2} (a)), then
edge $(6,3)$ will be the first edge, $(3,1)$ will be the second, and $(1,4)$ will
be the third. Then, if in round $t=2$ the new pair is $(3,5)$ and the associated
path is $3 \rightarrow 1 \rightarrow 5$, the newly revealed edge $(1,5)$
will be the fourth edge within $\bu$. With this ordering in mind, the algorithm
builds at time $t$ the $(n-1)$-dimensional vector $\bx_t = (x_{1,t}, \ldots, x_{n-1,t})^\top
\in \{0,1\}^{n-1}$ corresponding to the path disclosed at time $t$, where $x_{i,t}$ 
is 1 if and only if the $i$-th edge belongs to the path. Now, it is clear that 
$y_{i_t,j_t} = 1$ if $\bu^\top\bx_t \geq 1$,
and $y_{i_t,j_t} = 0$  
if $\bu^\top\bx_t = 0$. Therefore, this turns out to be
a sparse linear-threshold function learning problem, and a simple application of the 
standard Winnow algorithm
\cite{litt88} leads to an $\cO(K\log n) = \cO\left(|\Phi^G|\log n\right)$
mistake bound obtained
by an efficient ($\cO(n)$ time per round) algorithm,
independent of the structural properties of $G$, such as its diameter.

One might wonder if an adaptation of the above procedure
exists which applies to a general graph $G$ by, say, extracting a spanning tree
$T$ out of $G$, and then applying the Winnow algorithm on $T$.
Unfortunately, the answer is negative for at least two reasons. 
First, the above linear-threshold model heavily relies on the fact that
clusters are connected, which need not be the case in our similarity problem.
More critically, even if the clusters are connected in $G$, they need not be connected
in $T$. Figure \ref{f:2} (b)-(c) shows a typical 
example where Winnow applied to a spanning tree fails. 
%
Given this state of affairs,
we are lead to consider a slightly different representation for pairs of vertices 
and paths.
Yet, as before, this representation will suggest a linear separability 
condition, 
as well as the deployment of appropriate linear-threshold algorithms.
\begin{figure}[t!]
\begin{picture}(-60,200)(-60,200)
\scalebox{0.55}{\includegraphics{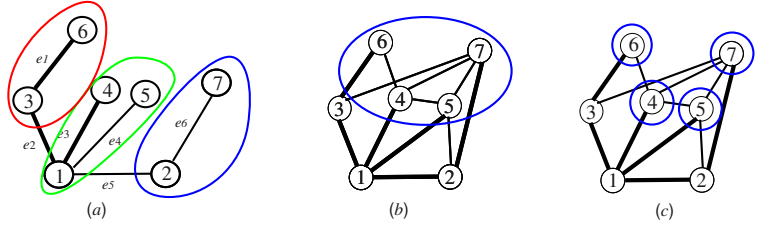}}
\end{picture}
\vspace{-2.0in}
\caption{{\small {\bf (a)} A tree with 3 clusters corresponding to the 3 depicted
connected components. Edges $e_2$ and $e_5$ are the cut-edges. Assuming edges are
initially revealed in the order of their subscripts, the associated
vector $\bu$ is $\bu = (0,1,0,0,1,0)^\top$. The algorithm receives
at time $t = 1$ the pair (6,4) along with path 
$6 \rightarrow 3 \rightarrow 1 \rightarrow 4$ (so that the 3 thick edges are revealed
in the first round). 
The associated feature vector
is $\bx_1 = (1,1,1,0,0,0)^\top$. Vertices $6$ and $4$ are disconnected as $\bu^\top\bx_1 \geq 1$.
{\bf (b)-(c)} The connectivity structure induced by the 
thick-edged spanning tree
on the blue cut in (b).
Vertices $4$, $5$, $6$, and $7$ are all connected in $G$ under the blue cut (b), but are they
are {\em all disconnected} in $T$ (c).
}
\label{f:2}
}
\vspace{-0.15in}
\end{figure}
\vspace{-0.1in}
\subsection{Algorithm and analysis}\label{ss:unknownaa}
\vspace{-0.05in}
Algorithm \ref{a:1} contains the pseudocode of our algorithm.
When interpreted as operating on vectors, the algorithm is simply
an $r$-norm perceptron algorithm~\cite{gls97,g03} with nonzero threshold, and norm
$r = 2\log (n-1)^2 =  4\log (n-1)$, being $(n-1)^2$ the length of the vectors maintained throughout,
and $s$ the dual to norm $r$. 
At time $t$, the algorithm observes pair $(i_t,j_t)$ and path $p_{(i_t\rightarrow j_t)}$,
builds the instance vector $\bx_t \in \{-1,0,1\}^{n-1}$ and
the long vector $\vct(X_t)$ out of the rank-one matrix 
$X_t = \bx_t\bx_t^\top$,
where $\vct(\cdot)$ is the standard vectorization of a matrix that stacks its columns one underneath the other.
In order to construct $\bx_t$ from $p_{(i_t\rightarrow j_t)}$,
the algorithm maintains a forest 
made up
of the union of paths seen so far. If pair $(i_t,j_t)$ is already connected by a path
$p$ in the current forest, then $\bx_t$ is the instance vector associated with path $p$
(as for the Winnow algorithm on a tree in the previous section, but taking edge orientations
into account -- see Figure \ref{f:3} in Appendix \ref{ssa:unknown} for details).
Otherwise, path $p_{(i_t\rightarrow j_t)}$ is added to the forest and $\bx_t$ will
be an instance vector associated with the new path $p_{(i_t\rightarrow j_t)}$.
In adding the new path to the forest, we need to make sure that no circuits 
are generated. 
In particular, as soon as a revealed edge in a path
causes two subtrees to join, the algorithm merges the two subtrees and processes
all remaining edges in that path in a sequential manner so as to avoid 
generating circuits. 
%
%
The algorithm will end up using a spanning tree $T$ of $G$ for building
its instance vectors $\bx_t$. This spanning tree is determined on the fly
by the adversarial choices of pairs and paths, 
so {\em it is not known to the algorithm} ahead of time.\footnote
{
In fact, because the algorithm is deterministic, this spanning tree is fully 
determined by the adversary. We are currently exploring to what extent randomization
is beneficial for an algorithm in this setting.
} 
But 
any later change to the spanning forest
is designed so as to keep consistency with all previous 
vectors $\bx_t$.

%
The decision threshold $ (r-1)||\bx_t||^4_r = (r-1)||\vct(X_t)||^2_r$ 
follows from a standard analysis of the $r$-norm perceptron algorithm
with nonzero threshold (easily adapted from \cite{gls97,g03}), as well as the update 
rule. 
In short, since the graph is initially unknown, the algorithm is pretending to learn vectors 
rather than (Laplacian-regularized) matrices, and relies on a regularization 
that takes advantage of the sparsity of such vectors. 
%
\begin{algorithm2e}[t]
\SetKwSty{textrm}
\SetKwFor{For}{{\bf For}}{}{}
\SetKwIF{If}{ElseIf}{Else}{if}{}{else if}{else}{}
\SetKwFor{While}{while}{}{}
{\bf Input:}~ Number of vertices $n = |V|$, $V = \{1,\ldots,n\}$, $n \geq 3$;\\
{\bf Initialization :} $\bw_{0} = \bzero \in \R^{(n-1)^2}$;
$r = 4\log(n-1)$, $s = \frac{r}{r-1}$;\\
\For{ $t=1,2,\ldots, T$ :}
{
\vspace{-0.15in}
{
\begin{itemize}
\item Get pair of vertices $(i_t,j_t) \in V^2$, and path $p_{(i_t\rightarrow j_t)}$. 
Construct instance vector $\bx_t \in \{-1,0,1\}^{n-1}$ as explained in the main text;
\vspace{-0.05in}
\item Build $(n-1)^2$-dimensional vector $\vct(X_t)$, where $X_t = \bx_t\bx_t^\top$, 
and predict ${\hat y}_t \in \{0,1\}$ as
\vspace{-0.05in}
\[
{\hat y_t} =
\begin{cases}
1 &{\mbox{if $\bw_{t-1}^\top \vct(X_t) \geq (r-1)\,||\bx_t||^4_r$ }}\\
0 &{\mbox{otherwise}},
\end{cases}
\]
\vspace{-0.2in}
\item Observe $y_t \in \{0,1\}$ and, if mistake ($y_t \neq {\hat y_t}$), update
\vspace{-0.1in}
\[
f(\bw_t) \leftarrow f(\bw_{t-1}) + (y_t - {\hat y_t})\,\vct(X_t),\qquad 
                                                 f(\bw) = \nabla ||\bw||^2_{s}/2~.
\]
\vspace{-0.3in}
\end{itemize}
}
}
\vspace{-0.33in}
\caption{$r$-norm Perceptron for similarity prediction in unknown graphs.}
\label{a:1}
\end{algorithm2e}
%
The analysis of Theorem \ref{t:unknown} below rests
on ancillary (and classical) properties of 
matroids on graphs. These are recalled in Appendix \ref{ssa:unknown}, before
the proof of the theorem.  
%
%
{\sloppypar
\vspace{-0.07in}
\begin{theorem}\label{t:unknown}
With the notation introduced in this section, 
let Algorithm \ref{a:1} be run on an arbitrary sequence of pairs 
$(i_1,j_1), (i_2,j_2), \ldots$ and associated sequence of paths
$p_{(i_1\rightarrow j_1)},p_{(i_2\rightarrow j_2)}, \ldots $\,.
Then we have the mistake bound
\(
M 
= \cO\left(|\Phi^G|^4\,\log n \right)\,.
\)
\end{theorem}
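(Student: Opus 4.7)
The plan is to recognise that Algorithm~\ref{a:1} is, up to the embedding $X\mapsto\vct(X)$, a standard $r$-norm Perceptron~\cite{gls97,g03} with nonzero threshold acting on $(n-1)^2$-dimensional inputs, and to bound its mistakes via the usual analysis once a suitable comparator $U^\star$ is exhibited. The argument then reduces to: (i) a reduction to a single static spanning tree; (ii) a choice of $U^\star$ yielding a clean linear-threshold separation; and (iii) bounds on $\|\vct(X_t)\|_r$ and $\|\vct(U^\star)\|_s$. The choice $r=4\log(n-1)$, so that $s=r/(r-1)$ is close to $1$, is what will turn the natural sparsity of paths into an $\cO(\log n)$ factor on the primal norm.

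Step~(i) is the subtle one. Although the algorithm's forest is built greedily and may be re-wired whenever an incoming path would close a cycle, I would use the graphic-matroid exchange facts recalled in Appendix~\ref{ssa:unknown} to argue that a single spanning tree $T^\star$ of $G$ can be fixed \emph{a posteriori} so that every $\bx_t$ is the $\{-1,0,+1\}$-signed incidence vector of the unique $T^\star$-path from $i_t$ to $j_t$. The edge swaps the algorithm performs correspond precisely to matroid exchanges and hence leave the instance vectors invariant, so the computation is indistinguishable from the one it would perform if it had $T^\star$ in hand from the outset. Since $T^\star\subseteq G$, we automatically have $|\Phi^{T^\star}|\leq|\Phi^G|$.

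With $T^\star$ fixed, I take the rank-$K$ comparator $U^\star=\lambda\sum_{k=1}^K \bv_k\bv_k^\top$ where $(\bv_k)_e=\Ind{y_{\mathrm{head}(e)}=k}-\Ind{y_{\mathrm{tail}(e)}=k}$ on the oriented edges of $T^\star$. Telescoping along the $T^\star$-path $v_0=i_t,\,v_1,\,\ldots,\,v_L=j_t$ gives $\bv_k^\top\bx_t=\Ind{y_{j_t}=k}-\Ind{y_{i_t}=k}$, from which
\[
\trace(U^{\star\top}X_t)\;=\;\lambda\sum_{k=1}^K(\bv_k^\top\bx_t)^2\;=\;2\lambda\,\Ind{y_{i_t}\neq y_{j_t}}\;\in\;\{0,\,2\lambda\}.
\]
Since $X_t=\bx_t\bx_t^\top$ has at most $|p_t|^2\leq n^2$ nonzero entries of magnitude one, $\|\vct(X_t)\|_r\leq n^{2/r}=\cO(1)$; hence the algorithm's threshold $(r-1)\|\bx_t\|_r^4$ sits in $\Theta(\log n)$ uniformly in $t$, and picking $\lambda$ of the same order gives a margin $\gamma=\Theta(\log n)$ on both sides of the threshold. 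On the dual side, $\bv_k$ is supported on the $|\Phi_k^{T^\star}|$ cut-edges of $T^\star$ touching class $k$, so $\bv_k\bv_k^\top$ has at most $|\Phi_k^{T^\star}|^2$ nonzero $\pm 1$ entries and the triangle inequality combined with $\sum_k|\Phi_k^{T^\star}|=2|\Phi^{T^\star}|$ yields $\|\vct(U^\star)\|_s\leq 4\lambda|\Phi^G|^2$.

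Plugging these quantities into the standard $r$-norm Perceptron mistake bound gives $M=\cO\bigl((r-1)\|\vct(X_t)\|_r^2\,\|\vct(U^\star)\|_s^2/\gamma^2\bigr)$; the $\lambda^2$ arising in $\|\vct(U^\star)\|_s^2$ cancels $\gamma^2=\Theta(\lambda^2)$, leaving the claimed $\cO(|\Phi^G|^4\log n)$. The step I expect to be the main obstacle is the matroid-based reduction to a single spanning tree $T^\star$: the algorithm's forest is genuinely adaptive, and verifying that the sequence of re-wirings never alters the instance vectors $\bx_t$ (equivalently, that they can all be realised inside one fixed tree) requires careful bookkeeping with the graphic matroid exchange property; the rest of the argument is then a fairly mechanical application of the $r$-norm Perceptron machinery on sparse linear comparators.
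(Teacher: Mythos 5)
Your proposal is correct and follows essentially the same route as the paper: fix the spanning tree $T$ induced by the algorithm's forest, take the comparator $U=\sum_{k=1}^K\bu_k\bu_k^\top$ with each $\bu_k$ supported (with $\pm1$ entries) on the class-$k$ cut-edges of $T$ so that $\langle U,X_t\rangle\in\{0,2\}$, and invoke the $r$-norm Perceptron bound with $\|\vct(U)\|_1\le\sum_k|\Phi_k^T|^2\le 4|\Phi^G|^2$ and $\|\vct(X_t)\|_\infty=1$. The only cosmetic differences are that the paper defines $\bu_k$ via the fundamental cutset matrix $Q$ of $T$ rather than by your endpoint-label telescoping (the two coincide), and that your worried-about reduction to a single tree needs no matroid-exchange argument, since the algorithm's forest only grows and never swaps previously committed edges.
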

\vspace{-0.05in}
}

\vspace{-0.1in}
\begin{remark} 
As explained in the proof of Theorem \ref{t:unknown}, 
the separability condition (\ref{e:separ}) 
allows one to run any vector or matrix mirror descent linear-threshold algorithm. 
In particular,
since matrix $U$ therein is spectrally sparse (rank $K << n$), one could use unitarily 
invariant regularization methods, like (squared) trace norm-based online algorithms 
(e.g., \cite{w07,ccg10,kst12}). 
For instance, Matrix Winnow (more generally, Matrix EG-like algorithms~\cite{matrixeg}) 
would get bounds which are linear in the cutsize but also (due to their unitary invariance) 
linear in $||\bx_t||_2^2$. The latter can be as large as the diameter of $T$, which
can easily be $\cO(n)$ even if the diameter of $G$ is much smaller. This makes these bounds
significantly worse than Theorem \ref{t:unknown} when the total cutsize 
$|\Phi^G|$ is small compared to $n$ (which is our underlying assumption throughout).
Group norm regularizers can also be used. Yet,
because $X_t$ has rank one, when $|\Phi^G|$ is small
these regularizers do not lead to better bounds\footnote
{
In fact, our $r$-norm algorithm operating on $\vct(\cdot)$ vectors is equivalent
to a group norm regularization technique applied to the corresponding matrices where 
row and column norms are both equal to $s=\frac{r}{r-1}$.
} 
than Theorem \ref{t:unknown}.
Moreover, it is worth mentioning that, among the standard mirror descent linear-threshold
algorithms operating on vectors $\vct(\cdot)$, our choice of the $r$-norm Perceptron is motivated 
by the fact this algorithm achieves a logarithmic bound in $n$ with no prior knowledge of the actual
cutsize $|\Phi^G|$ (or an upper bound thereof) -- see Section \ref{sec:hs}, and the discussion
in \cite{g03} about tuning of parameters in $r$-norm Perceptron and Winnow/Weighted Majority-like
algorithms.
%

As a final remark, 
our algorithm has an $\cO(n^2)$ running time per round, trivially due to the update rule operating on $\cO(n^2)$-long vectors. 
The construction of instance vector $\bx_t$ out of path
$p_{(i_t\rightarrow j_t)}$ can indeed be implemented faster than $\Theta(n^2)$ by maintaining well-known data 
structures for disjoint sets (e.g., \cite[Ch. 22]{clr90}). 
%
\end{remark}
\vspace{-0.1in}

\appendix

\section{Proofs}
This appendix contains all omitted proofs. Notation is as in the main text.

\subsection{Missing proofs from Section~\ref{sec:corr}}\label{sec:proofofreductions}
\sloppypar{
The set of example sequences {\em consistent} with a concept $f$ for class prediction is denoted by 
$\cseq^c(f) := (\{(x,f(x))\}_{x\in \cX})^*$, and for similarity prediction by 
$\cseq^s(f) := (\{((x',x''),\szm(f(x'),f(x''))\}_{x',x''\in \cX})^*$.  
A prediction algorithm is a mapping $A: \seqset{(\cX\times \cY)} \into \cY^\cX$ from example sequences to prediction functions.  
Thus if $A$ is a prediction algorithm and $S=\seq{(x_1,y_1),\ldots,(x_{T},y_{T})}\in  \seqset{(\cX\times \cY)}$ is an example sequence, then  the {\em online} prediction mistakes are 
\[ 
M_A(S) := \sum_{t=1}^{T} \id{A(\seq{(x_1,y_1),\ldots,(x_{t-1},y_{t-1})})(x_t) \ne y_t}\,.
\]
We write $S' \subseteq S$ to denote that $S'$ is a subset of $S$ as well to denote that $S'$ is a {\em subsequence} of $S$. 

We now  introduce a weaker notion of a mistake bound as defined with respect to  specific sequences rather than to the alternate notion of a concept.  The weakness of this definition allows the construction in the following lemma to apply to ``noisy'' as well ``consistent'' example sequences.
}
\begin{definition}\label{def:smb}
Given an algorithm $A$, we define the {\bf subsequential mistake bound} with respect to an example sequence $S$ as
\[\smb_A(S) := \max_{S' \subseteq S} M_A(S')~.
\]
\end{definition}
Thus, a subsequential mistake bound is simply the ``worst-case'' mistake bound over all subsequences.
\begin{lemma}\label{lem:equiv}
Given an  online {\em classification} algorithm $A$, there exists a similarity algorithm $A'$ such that for 
every  sequence $S= \seq{(x_1,y_1),\ldots,(x_{2T},y_{2T})}$ its mistakes on  
\[ S' = \seq{((x_1,x_2),\szm(y_1,y_2)),\ldots,((x_{2T-1},x_{2T}),\szm(y_{2T-1},y_{2T}))}\] is bounded as
\begin{equation}\label{eq:mup}
M_{A'}(S') ) \le  c\, \smb_{A}(S) \log_2 K\,,
\end{equation}
with $c< 5$.
\end{lemma}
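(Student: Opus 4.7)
The plan is to construct $A'$ that internally simulates $A$ on a virtual classification subsequence of $S$ built from the similarity feedback. At each trial $A'$ queries $A$ for predictions $\hat{y}_{2t-1}, \hat{y}_{2t}$ on the two vertices of the current pair and predicts similarity as $\szm(\hat{y}_{2t-1}, \hat{y}_{2t})$. The hard question — which ultimately drives the $\log_2 K$ factor — is how to update $A$ after a similarity mistake, since $A'$ has access only to similarity feedback and never directly to class labels.

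I will handle the update via a binary search over the $K$ class labels, using a growing pool of ``reference'' vertices — at most one per class — which are accumulated as new classes reveal themselves through ``dissimilar'' feedback. Given a mistake at pair $(x_{2t-1}, x_{2t})$, at least one of $\hat{y}_{2t-1}, \hat{y}_{2t}$ is wrong, and $A'$ identifies the true class of one of them by $\lceil \log_2 K \rceil$ binary decisions, each pitting $A$'s prediction against a partition of the current reference pool and driven by a single classification query to $A$. Each level of the binary search produces one virtual labeled example whose label is correctly derived from similarity against a known-class reference; this example is then fed to the internal copy of $A$.

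For the analysis, every similarity mistake contributes at most $\lceil \log_2 K \rceil$ virtual classification trials, and the virtual examples (together with the references they resolve against) form a subsequence of $S$ carrying the true labels. Applying the subsequential mistake bound $\smb_A(S)$ to this subsequence, combined with the observation that each similarity mistake forces at least one classification mistake by $A$ along the associated chain of binary decisions, yields $M_{A'}(S') \le c\,\smb_A(S)\,\log_2 K$ after amortization. The main obstacle is the bootstrap phase, before all $K$ references exist: the binary search must degrade gracefully there, and I expect this is dealt with by treating ``new-class'' mistakes as a separate category, bounded either directly by $K$ or via the reverse reduction~\eqref{eq:dsimtoclass}, and absorbed into the constant $c$. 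Sharpening the bookkeeping so that $c < 5$ rather than some larger explicit constant is where the most delicate counting occurs — in particular the factor $2$ coming from the pair structure must not compound with the factor from the binary search.
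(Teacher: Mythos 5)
Your construction has a genuine gap: it relies on information the similarity protocol does not provide. On each trial the learner receives exactly one bit of feedback, $\szm(y_{2t-1},y_{2t})$, for the adversarially chosen pair. Your binary search requires comparing $x_{2t-1}$ (or $x_{2t}$) against reference vertices from your pool --- i.e., it needs the values $\szm(y_{2t-1}, f(r))$ for references $r$ --- but these are similarity \emph{queries} that the learner has no way to ask or answer. Querying the internal copy of $A$ does not help, since $A$'s predictions may be wrong and carry no ground truth; and the single ``dissimilar'' bit cannot even tell you which of the two vertices in the pair is misclassified, let alone resolve its true class down a chain of $\lceil\log_2 K\rceil$ decisions. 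Consequently the ``virtual labeled examples'' you want to feed to $A$ cannot be certified as correctly labeled, and the claim that each similarity mistake forces a classification mistake of $A$ on a true subsequence of $S$ does not go through. (Your reference-pool device is essentially the paper's construction for the \emph{reverse} reduction, Lemma~\ref{lem:simtoclass}, where it works precisely because classification feedback reveals the true label $y_t$, from which $\szm(f(r),y_t)$ can be computed for every reference.)

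The paper's proof takes a different, non-constructive-looking route: since the true class labels are unobservable, it ``hallucinates'' \emph{all} possible label continuations. It maintains a pool of speculative histories, predicts by a weighted majority vote over $A$ run on each history, and on a mistake replaces each erring history by its $K(K-1)$ (resp.\ $K$) possible continuations with weights scaled by $\beta/(K(K-1))$ (resp.\ $\beta/K$). At least one history in the pool is always a correctly labeled subsequence of $S$, hence suffers at most $\smb_A(S)$ mistakes and retains weight at least $(\beta/(K(K-1)))^{\smb_A(S)}$, while the total weight shrinks by a factor $(1+\beta)/2$ per master mistake; comparing the two gives the bound, with $\beta=0.294$ yielding $c\approx 4.99$. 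The $\log_2 K$ arises from the weight ratio $\log_2\bigl(K(K-1)/\beta\bigr)/\log_2\bigl(2/(1+\beta)\bigr)$, not from a binary search, and the resulting algorithm is exponential-time --- which the paper explicitly flags as unavoidable for this generic reduction. The efficiency of your proposed scheme should itself have been a warning sign.
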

\begin{proof}
The proof of~\eqref{eq:mup} works by a modification of the standard weighed majority algorithm~\cite{\WM} arguments.  
The key idea is that similarity reduces to classification if we received the actual class labels as feedback rather than just {\sc{similar/dissimilar}}\ as feedback.  Since we do not have the class-labels, 
we instead ``hallucinate'' all possible feedback histories and then combine these histories on each trial using a weighted majority vote.   If we only keep track of histories generated when the weighted majority vote is mistaken, the bound is small.  Our master voting  algorithm $A'$ follows.   
\begin{enumerate}
\item {\bf Initialisation:}  We initialize the parameter $\beta = 0.294$.  We create a pool containing example sequences (``hallucinated {\em histories}'')  $\cS := \{s\}$, with initially the empty history $s =\langle\rangle$ with weight $w_s :=1$ .
\item {\bf For} $t=1,\ldots,T$ {\bf do} \label{en:re}
\item {\bf Receive:} the pattern pair $(x_{2t-1},x_{2t})$ 
\item {\bf Predict:} {\sc similar} if 
\[ \sum_{s\in \cS} w_s \id{A(s)(x_{2t-1}) = A(s)(x_{2t})} \ge \sum_{s\in \cS} w_s \id{A(s)(x_{2t-1}) \ne A(s)(x_{2t})} \]
otherwise predict {\sc dissimilar}.
\item {\bf Receive:} Similarity feedback $\szm(y_{2t-1},y_{2t})$ if prediction was correct go to~\ref{en:re}.  
\item Two cases, first if this algorithm predicted {\sc similar} when the pair was {\sc dissimilar} then 
for each history $s\in\cS$ with a mistaken prediction create $K\times (K-1)$ histories $s_{1,2},\ldots,s_{K,K-1}$ so that $s_{i,j}$ is equal to $s$ but has the two ``speculated'' examples $(x_{2t-1},i),(x_{2t},j)$ appended to it.   Then set $w_{s_{1,2}} = \ldots = w_{s_{K,K-1}} := \frac{\beta}{K(K-1)}w_s$ and remove $s$ from $\cS$.  Second (predicted {\sc dissimilar}) as above but now we need to  add only $K$ new histories to  the pool.
\item Go to~\ref{en:re}.    
\end{enumerate}
Observe that there exists a history $s^*\in\cS$  generated by no more  $\smb_{A}(S)$ ``mistakes'' since there is always at least one history in the pool $\cS$ which is a subsequence of $S$.
Thus 
\[ 
w_{s^*} \ge \left({\frac{\beta}{K(K-1)}}\right)^{\smb_{A}(S)}~. 
\]  
Furthermore, the total weight of the pool of histories $W := \sum_{s\in\cS} w_s$  is reduced to a fraction of its 
weight no larger than $\frac{1+\beta}{2}$ whenever this master algorithm $A'$ makes a mistake.  
Thus since $W\ge w_{s^*}$, we have
\[
\left(\frac{1+\beta}{2}\right)^{M_{A'}(S')}  \ge \left({\frac{\beta}{K(K-1)}}\right)^{\smb_{A}(S)}.
\]
%
Solving for $M_{A'}(S') $ we have
\[
M_{A'}(S') \le \smb_{A}(S) \left(\frac{\log_2\frac{K(K-1)}{\beta} }{\log_2 \frac{2}{1+\beta}}\right)\,.
\]
Substituting in $\beta = .294$ allows us to obtain the upper bound of~\eqref{eq:mup} with $c\approx4.99$.  
\end{proof}
We observe, that reduction of similarity to classification holds for a wide variety online mistake bounds.  Thus, e.g., we do not require the input sequence to be {\em consistent}, i.e., in $S$ we may have examples $(x',y')$ and $(x'',y'')$ 
such that $x' = x''$ but $y' \ne y''$.  The usual type of mistake bound is  {\em permutation invariant} i.e., 
the bound is the ``same'' for all permutations of the input sequence; typical examples include the  the 
{\sc Weighted Majority}~\cite{\WM} and {\sc $p$-Norm Perceptron}~\cite{gls97,g03} algorithms.  
Observe that if ${B^p_{A}}(S)$ is a permutation invariant bound, then  $\smb_{A}(S) \le B^p_{A}(S)$, 
since every subsequence of $S$ is the prefix of a permutation of $S$.  
However, our reduction also more broadly applies to such ``order-dependent'' bounds, 
as the shifting-expert bounds in~\cite{\trackingexpert}.   

We now show that classification reduces to similarity.  This reduction is efficient and does not introduce a multiplicative constant but requires  the stronger assumption of {\em consistency} not required by Lemma~\ref{lem:equiv}.  
%
\begin{lemma}\label{lem:simtoclass}
Given an online similarity algorithm $A^s$ there exists an online classification algorithm $A^c$ such that for 
any concept $f$ if $S\in \cseq^c(f)$ then
\begin{equation}\label{eq:simtoclass}
M_{A^c}(S) \le \max_{S' \in \cseq^s(f)} M_{A^s}(S') +K~.
\end{equation}
\end{lemma}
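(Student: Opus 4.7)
The plan is to construct $A^c$ from $A^s$ via a representative-based reduction: $A^c$ maintains an ordered list of at most $K$ representatives $(r_1,\ell_1),(r_2,\ell_2),\ldots$, where $r_k$ is the first pattern it encountered with label $\ell_k$, and it uses $A^s$ to decide, for the current pattern $x_t$, which representative (if any) it resembles. Concretely, on trial $t$, $A^c$ feeds $A^s$ the pattern pairs $(x_t,r_1),(x_t,r_2),\ldots,(x_t,r_{K_t})$ in order (where $K_t$ is the number of classes seen so far), reads off $A^s$'s similarity predictions, and outputs $\ell_k$ for the smallest $k$ on which $A^s$ predicted \textsc{similar}, or a fixed default label (say $1$) if $A^s$ predicted \textsc{dissimilar} on every query. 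After observing $y_t$, $A^c$ feeds back the \emph{true} similarity labels $\szm(y_t,\ell_k)$ for each of those queries in the same order, and, if $y_t$ is a previously unseen class, appends $(x_t,y_t)$ as a new representative.

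Because $S \in \cseq^c(f)$ is consistent with $f$, the true similarity label $\szm(y_t,\ell_k)$ equals $\szm(f(x_t),f(r_k))$, so the similarity sequence $S'$ built by $A^c$ automatically lies in $\cseq^s(f)$. Consequently $M_{A^s}(S') \le \max_{S''\in\cseq^s(f)} M_{A^s}(S'')$, and it suffices to show $M_{A^c}(S) \le M_{A^s}(S') + K$.

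The heart of the argument is a charging scheme matching each mistake of $A^c$ either to a distinct mistake of $A^s$ on $S'$ or to a ``first appearance'' of one of the $K$ classes. If $A^c$ predicted $\ell_k$ because $A^s$ said \textsc{similar} on $(x_t,r_k)$ but $y_t\neq\ell_k$, then $A^s$ erred on that very query. If $A^s$ said \textsc{dissimilar} on every queried pair but $y_t$ coincides with some already-registered $\ell_{k^*}$, then $A^s$ erred on the query $(x_t,r_{k^*})$. The only remaining losing trials are those on which $y_t$ is a brand new class and the default prediction happens to miss; since each class is registered the first time it is seen, there are at most $K$ such trials.

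Since every trial of $A^c$ appends a fresh, disjoint block of queries to $S'$, the $A^s$-mistakes that we charge to different trials of $A^c$ are at distinct positions of $S'$, and so no double-counting occurs. The only delicate point is to ensure that in the ``dissimilar everywhere, but $y_t$ is an old class'' case there really is a representative $r_{k^*}$ with $\ell_{k^*}=y_t$ available to witness an $A^s$-mistake; this is guaranteed precisely because $A^c$ installs a representative the first time each class appears, so on any subsequent trial with $y_t$ already seen, the query $(x_t,r_{k^*})$ is among those presented to $A^s$. Combining these observations yields $M_{A^c}(S) \le M_{A^s}(S') + K \le \max_{S'\in\cseq^s(f)} M_{A^s}(S') + K$, as required.
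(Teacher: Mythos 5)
Your construction is essentially the paper's own: keep one representative per class seen so far, query $A^s$ on the current pattern paired with each representative, predict a class that $A^s$ deems similar, and charge every mistake either to a wrong similarity prediction of $A^s$ or to one of at most $K$ first appearances of a class. The one point to tighten (which the paper's proof also glosses over) is that all $K_t$ predictions in a trial are read off \emph{before} $y_t$ is revealed, so the query you charge must sit \emph{first} in its block of $S'$ for $A^s$'s online prediction there to coincide with the stale prediction $A^c$ actually used --- either reorder each block to put the witnessing pair first, or append only that single pair to $A^s$'s history; with that fix the charging and the bound $M_{A^c}(S)\le M_{A^s}(S')+K$ go through exactly as you argue.
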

\begin{proof}
As a warm-up, pretend we know a set $P\subseteq X$ such that $|P| = K$ 
and for each $i\in \{1,\ldots, K\}$ there exists an $x\in P$ such that $f(x) = i$. 
Using $P$ we create algorithm $A^c$ as follows. 
We maintain a history (example sequence) $h$, which is initially empty.  
Then on every trial when we receive a pattern $x_t$ we predict 
$\hat{y}_t \in \{f(x) : A^s(h)((x,x_t)) = \text{ {\sc similar}}, x\in P\}$, 
and if the set contains multiple elements or is empty then we predict arbitrarily.  
If $A^c$ incurs a mistake, we add to our history $h$ the $K$ examples  
$((x,x_t),\szm(f(x),f(x_t)))_{x\in P}$.  
Observe that if $A^c$ incurs a mistake then at least one example corresponding to a mistaken 
similarity prediction is added to $h$ and necessarily $h\in \cseq^s(f)$. 
Thus the mistakes of $A^c$ are bounded by 
$\max_{S' \in \cseq^s(f)} M_{A^s}(S')$.  
Now, since we do not actually know a set $P$, we may modify our algorithm $A^c$ 
so that although $P$ is initially empty we predict as before, and if we make a mistake on $x_t$ 
because there does not exist an $x\in P$ such that $f(x) = f(x_t)$, 
we then add $x_t$ to $P$. We can only make $K$ such mistakes, so we have the bound of~\eqref{eq:simtoclass}. 
\end{proof}

\begin{proofof}{Theorem~\ref{thm:doubleequiv}}
If $S$ is a classification sequence consistent with a concept $f$ and $A$ is a classification algorithm, 
then $\smb_A(S) \le \mb_A(f)$, and hence~\eqref{eq:mup} implies~\eqref{eq:dmup}. 
Then, since~\eqref{eq:simtoclass} is equivalent to~\eqref{eq:dsimtoclass}, we are done.
\end{proofof}

\subsubsection{The $\log K$ term is necessary in Theorem~\ref{thm:doubleequiv}}\label{sec:lognec}
In our study of class prediction on graphs we observed (see Appendix~\ref{ssa:missingprospectus}) that certain 2-class bounds may be converted to $K$-class bounds with no explicit dependence on $K$.   Yet Theorem~\ref{thm:doubleequiv} introduces a factor of $\log K$ for similarity prediction.
So a question that arises is this simply a byproduct of the above analysis or is the 
``$\log K$'' factor tight.   
In the following, we demonstrate it is tight by introducing the {\sc paired permutation} problem, 
which may be ``solved'' in the classification setting with no more than $\cO(K)$ mistakes. 
Conversely, we show that an adversary can force $\Omega(K \log K)$ mistakes in the similarity 
setting.

We introduce the following notation.  
Let $z : \{1,\ldots,K\} \into \{1,\ldots,K\}$ 
denote a {\em permutation} function, a member of the set $\ZZ$ of all $K!$ bijective functions 
from $\{1,\ldots,K\}$ to $\{1,\ldots,K\}$.   
A {\em paired permutation} function is the mapping $y_z  : \{1,\ldots,K\}^2 \into \{1,\ldots,K\}$, 
with $y_z(x',x'') := \max(z(x'),z(x''))$.  
So, for example, consider a 3-element permutation 
$z(1)\rightarrow 2,  z(2)\rightarrow 3$, and $z(3) \rightarrow 1$.  Then, e.g., 
$y_z(1,2) \rightarrow 3$ and $y_z(1,1) \rightarrow 2$.  
Thus, we define the set of the {\sc paired permutation} problem example sequences 
for class prediction as $\cpp^c := \cup_{z\in \ZZ} \cseq^c(y_z)$, and for similarity as 
$\cpp^s := \cup_{z\in \ZZ} \cseq^s(y_z)$.
\begin{theorem}\label{thm:lognec}
There exists a class prediction algorithm $A$ such that for any $S\in \cpp^c$ 
we have $M_A(S) = \cO(K)$.  
Furthermore, for any similarity prediction algorithm $A'$, there exists an $S'\in \cpp^s$ 
such that $M_{A'}(S') = \Omega(K \log K)$.  
\end{theorem}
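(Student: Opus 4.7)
I would construct a classification algorithm $A$ that learns $z$ incrementally from the top down, maintaining a set $F \subseteq \{1,\ldots,K\}$ of ``solved'' vertices whose $z$-value is known, together with the invariant that $F$ always consists of the preimages of the top $|F|$ values of $z$. On pair $(x',x'')$: if both endpoints lie in $F$, predict the exact $\max(z(x'), z(x''))$, always correct; if exactly one endpoint, say $x'$, lies in $F$, predict $z(x')$, which dominates since every unsolved vertex has $z$-value at most $c := K - |F|$; and otherwise predict $c$. Mistakes arise only in the last case. A mistake on a pair $(x',x'')$ with both endpoints unsolved certifies $z^{-1}(c) \notin \{x',x''\}$ and shrinks the candidate set $S$ for $z^{-1}(c)$; a correct prediction in the same case certifies $z^{-1}(c) \in \{x',x''\} \cap S$, which together with history may pin down $z^{-1}(c)$ and advance $F$. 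A careful amortized analysis, bounding mistakes via a global potential across the $K$ phases, yields the total count $\cO(K)$.

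\paragraph{Plan for part 2.}
For the lower bound I would use a Halving-type adversary maintaining a version space $V \subseteq \mathbb{Z}_K$ of permutations consistent with the labels revealed so far, starting from $|V| = K!$. On each round, after observing the learner's prediction, the adversary reveals the opposite label whenever a query is available that keeps $|V|$ as large as possible. The key lemma is that whenever $|V| \geq 2$, there is always a query of the form $((i,j),(i,k))$ with $i,j,k$ distinct whose two possible labels are each supported by a constant fraction of $V$. The similarity label of this query is $0$ iff $z(i) > \max(z(j), z(k))$; by a simple double-counting argument, the expected fraction of $V$ giving label $0$ to a uniformly random such triple equals exactly $1/3$, so some triple realises a constant-fraction-balanced split. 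Each forced mistake therefore shrinks $|V|$ by at most a constant factor, yielding $\Omega(\log_2 K!) = \Omega(K \log K)$ mistakes before $|V|$ collapses to a single permutation.

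\paragraph{Main obstacles.}
The harder step in part 1 is avoiding a $\Omega(K^2)$ bound coming from a naive per-phase tally, since in phase $c$ the adversary can in principle force up to $c-1$ mistakes, each shrinking the candidate set by only one or two elements. Attaining $\cO(K)$ overall therefore requires a global potential that credits \emph{correct} predictions in phase $c$ (those that already reduce $|S|$ via $S \leftarrow S \cap \{x',x''\}$) toward reducing work in later phases. In part 2 the subtle point is handling pathological version spaces in which most triples might be degenerate; this is addressed by noting that if $z_1, z_2 \in V$ differ at some index $i$, then some choice of $j,k$ produces a triple on which $z_1$ and $z_2$ disagree, ensuring the averaging argument always supplies a usable query regardless of how adversarially $V$ is structured.
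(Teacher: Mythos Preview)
Both halves of your proposal have genuine gaps, and in each case the paper takes a different route.

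\textbf{Part 1.} Your top-down algorithm (predict $c=K-|F|$ when both endpoints are unsolved) does not achieve $\cO(K)$ as described. The only information you extract from a mistake on $(x',x'')$ is ``$z^{-1}(c)\notin\{x',x''\}$''; you never use the actual feedback label $y=\max(z(x'),z(x''))$. With this state, the adversary can in each phase $c$ query $\Theta(c)$ fresh pairs among the unsolved vertices that avoid $z^{-1}(c)$, forcing $\Theta(c)$ mistakes before $|S|=1$. Your proposed amortisation (``credit correct predictions to later phases'') does not help: correct predictions in the both-unsolved case occur only on pairs containing $z^{-1}(c)$, and the adversary simply never presents such a pair until $|S|=1$, so no credits accrue. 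The paper sidesteps this entirely: it runs a weighted-majority master over \emph{hallucinated histories}, using as base learner the trivial consistent predictor for permutations (which makes at most $K-1$ mistakes). On each master mistake the feedback $y$ tells us $z(x')=y$ or $z(x'')=y$, so each mistaken history branches into two, and the standard weighted-majority analysis yields $\cO(K)$.

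\textbf{Part 2.} Your restriction to queries of the form $((i,j),(i,k))$ with $i,j,k$ distinct is too weak, and the balancing argument is incorrect. First, such a query has label $0$ iff $z(i)>\max(z(j),z(k))$, i.e.\ it only reveals which element of a triple has the largest $z$-value. Two permutations that differ only by swapping $z^{-1}(1)$ and $z^{-1}(2)$ therefore agree on \emph{every} such query (neither of the bottom two elements is ever the maximum of a distinct triple), so your claim ``if $z_1,z_2$ differ at $i$ then some triple distinguishes them'' is false. Second, the averaging step does not yield a constant-balanced query: $\E_\tau[p_\tau]=1/3$ is compatible with every $p_\tau$ lying in $\{0\}\cup[1-\varepsilon,1]$ (or arbitrarily close to $\{0,1\}$), so you have not shown the version space shrinks by at most a constant factor per mistake. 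The paper avoids all of this by using queries $((x',x''),(x'',x''))$, whose similarity label is exactly the comparison $[z(x')<z(x'')]$. Then each mistake is literally one $<$-comparison, and the $\Omega(K\log K)$ bound follows directly from the classical comparison-sorting lower bound.
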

\begin{proofof}{Theorem~\ref{thm:lognec}}
First, we show that there exists a class prediction algorithm $A$ such that for any $S\in \cpp^c$ we have 
$M_{A}(S) = \cO(K)$.  Consider the simpler problem for the concept class of permutations $\cup_{z\in \ZZ} \cseq^c(z)$.  
By simply predicting {\em consistently} with the past examples we cannot incur more than $K-1$ mistakes. 
The algorithm $A_0(s)$ ({\sc consistent predictor}), predicts $y$ on receipt of pattern $x_t$ 
if there exists some example $(x,y)$ in its history $s$ such that $x_t = x$, 
otherwise it predicts a $y\in\{1,\ldots,K\}$ not in its history.
Now, using $A_0$ 
as a base algorithm, we can use the principle of the master algorithm of 
Lemma~\ref{lem:equiv} to achieve $\cO(K)$ mistakes for the paired permutation problem. 
Thus, when we receive a pair $((x',x''),y)$ either $y_z(x')=y$ or $y_z(x'')=y$, 
hence on a mistake we may ``hallucinate'' these two possible continuations.   
Our {\em class} prediction example sequence is
$S= \seq{(x'_{1},x''_{1}),y_1),\ldots,((x'_{T},x''_{T}),y_T)}\in \cpp^c$, 
and the algorithm $A$ follows.
\begin{enumerate}
\item {\bf Initialisation:}  We initialize the parameter $\beta = 0.294$.  
We create a pool containing example sequences (``hallucinated {\em histories}'')  
$\cS := \{s\}$, with initially the empty history $s =\langle\rangle$ with weight $w_s :=1$ .
\item {\bf For} $t=1,\ldots,T$ {\bf do} \label{en:pp}
\item {\bf Receive:} the pattern $x_t=(x'_{t},x''_{t})$ 
\item {\bf Predict:} 
\begin{equation}\label{eq:pphat}
\hat{y}_t = \argmax_{k\in \{1,\ldots,K\}} \sum_{s\in \cS} w_s \id{\max(A_0(s)(x'_{t}),A_0(s)(x''_{t}))=k}\,.
\end{equation}
\item {\bf Receive:} Class feedback $y_t\in\{1,\ldots,K\}$. If prediction was correct go to~\ref{en:pp}.  
\item For each history $s\in\cS$ with a mistaken prediction, 
create two histories $s',s''$ so that $s'$ ($s''$) is equal to $s$ but has the 
example $(x'_{t},y)$ (the example $(x''_{t},y)$) appended to it.   
Then set $w_{s'} =  w_{s''} := \frac{\beta}{2}w_s$~, and remove $s$ from $\cS$.  
\item Go to~\ref{en:re}.    
\end{enumerate}
Observe that there exists a history $s^*\in\cS$ generated by no more  $K-1$ ``mistakes''.  
This is because, by induction, there is always a consistent history 
(i.e., the empty history is initially consistent, 
and when the master algorithm $A$ makes a mistake and a consistent history makes a mistake, then
either the continuation $(x'_{t},y)$ or $(x''_{t},y)$ is consistent).  
Finally, observe that once a consistent history contains $K-1$ examples, it can no longer make mistakes. 
Thus 
\[ 
w_{s^*} \ge \left({\frac{\beta}{2}}\right)^{K-1}. 
\]  
Furthermore, the total weight of the pool of histories $W := \sum_{s\in\cS} w_s$  
is reduced to a fraction of its weight no larger than $\frac{1+\beta}{2}$ 
whenever this master algorithm $A$ makes a mistake. 
Since $W\ge w_{s^*}$, we have
\[
\left(\frac{1+\beta}{2}\right)^{M_{A}(S)}  \ge \left({\frac{\beta}{2}}\right)^{K-1}~.
\]
%
Solving for $M_{A}(S) $ we can write
\[
M_{A}(S) \le (K-1) \left(\frac{\log_2\frac{2}{\beta} }{\log_2 \frac{2}{1+\beta}}\right)\,.
\]
Substituting in $\beta = 0.294$ allows us to obtain the upper bound of 
$M_{A}(S) \le 4.1 (K-1)$. The argument then follows as in Theorem~\ref{thm:doubleequiv}.   

Now consider  the similarity problem.  
If we receive an instance of the form $(((x',x''),(x'',x'')),y)$ 
(with $x'\ne x''$) then $y =\text{\sc similar}$ implies $z(x') < z(x'')$ 
and $y =\text{\sc disimilar}$ implies $z(x') > z(x'')$.  
Thus with each mistaken example we learn precisely a single `$<$' comparison. 
It follows from standard lower bounds on comparison-based sorting algorithms~(e.g.,~\cite{clr90}) 
that an adversary can force $\Omega( K \log K)$ comparisons, and thus mistakes, 
for any ``comparison''-algorithm to learn an arbitrary permutation.
\end{proofof}

Any problem associated with a set of example sequences $\cS$ may be iterated into a set of $r$ 
independent problems by a cross-product-like construction, so that if $S^1,\ldots,S^r \in \cS$ and 
if $S^i = \seq{(x^i_1,y^i_1),\ldots,(x^i_{T_i},y^i_{T_i})}$ then an $r$-iterated example sequence is 
\[
\seq{((x^1_1,1),y^1_1),\ldots,((x^1_{T_1},1),y^1_{T_1}),\ldots,((x^i_1,i),y^i_1),\ldots,((x^r_{T_r},r),y^r_{T_r})}\,.
\]
We have simply conjoined the $r$ example sequences into a single example sequence with each pattern 
``$x$'' paired with an integer indicating from which sequence it originated. 
Thus by $r$-iterating the paired permutation problem we trivially observe mistake bounds of 
$\cO(r K)$ and $\Omega(r K \log K)$ for all $r\in \N$  in the class and similarity setting, respectively,
thereby implying that the multiplicative ``$\log K$'' gap occurs for an infinite family
of classification/similarity problems.


\subsection{Missing proofs from Section~\ref{sec:spg}} \label{ssa:missingprospectus}

\subsubsection*{Lifting $2$-class prediction to $K$-class prediction on graphs}
Suppose we have an algorithm for the 2-class graph labeling problem with
a mistake bound of the form $M \le c |\mct^{G}(\by)|$ for all $\by \in \{1,2\}^n$, 
with $c\ge 0$, We show that this implies the existence in the $K$-class 
setting of an algorithm with a bound of $M \le 2 c |\mct^{G}(\by)|$ for all 
$\by \in \{1,\ldots, K\}^n$, where $K$ need not be known in advance to the algorithm.  
%
%

The algorithm simply works by combining the predictions of ``one versus rest'' classifiers. 
We train one classifier per class, and introduce a new classifier as soon as that class first appears.  
On any given trial, the combination is straightforward: If there is only one classifier
predicting with its own class then we go with that class,
otherwise we just assume a mistake. Thus, on any given trial, we can only be mistaken if one of the 
current ``one-verse-rest'' classifiers makes a mistake. This implies that our mistake bound is the sum 
of the mistake bounds of all of the ``one-verse-rest'' classifiers. Because each such binary classifier
has a mistake bound of the form $M \le c |\mct_k^{G}(\by)|$, and 
$\sum_{k=1}^K |\mct_k^{G}(\by)| = 2 |\mct^{G}(\by)|$, we have that the $K$-class classifier
has a bound of the form $2c|\mct^{G}(\by)|$.

\ \\
\noindent{\bf Proof of Theorem \ref{t:hardness}: }
We show that computing the partition function for the Ising model on a general graph reduces to  computing the partition function problem for the Ising Model on a path graph with pairwise constraints hence showing  \#P-completeness.  
The partition problem for the Ising Model on a graph is defined by, 
\begin{quote}
{\sc Instance :} An $n$-vertex graph $G$, and a natural number, $\beta$, presented in unary notation.  \\

{\sc Output:} The value of the partition function $Z_G(\beta)$,
\[
Z_G(\beta) :=  \sum_{\by\in\{1,2\}^n}2^{-\beta|\cut^G(\by)|}~.
\]
\end{quote}
This problem was shown \#P-complete in~\cite[Theorem 15]{JS93}.  
The reduction to the partition problem on a path graph with constraints is as follows.

We are given a graph $G = (V_G,E_G)$ with $n=|V_G|$, and further assume each vertex is ``labeled'' 
uniquely from $1,\ldots, n$. We construct the following path graph with pairwise constraints
(see Figure \ref{f:ising}) for an illustration.
\begin{enumerate}
\item Find a spanning tree $T = (V_T,E_T)$ of $G$, and let $R = E_G-E_T$.
\item Perform a depth-first-visit of $T$.  From the $2n-1$ vertex visit sequence, 
create an isomorphic path graph $P_0$ with $2n-1$ vertices such that each vertex in $P_0$ 
is labeled with the corresponding vertex label from the visit of $T$.  
Thus each edge of $T$ is mapped to two edges in  $P_0$.
\item We now proceed to create a path graph $P = (V_P,E_P)$ from $P_0$, 
which also includes each edge in $R$ twice. We initialize $P$ as a 
``duplicate'' of  $P_0$ including labels. For each edge $(v_r',v_r'')\in R$ we then do the following:
\begin{enumerate}
\item Choose an arbitrary vertex $v'\in V_P$ so that  $v'$ and $v_r'$ have the same label;
\item Let $v''''$ be a {\em neighbor} of $v'$ in $P$ (i.e, $(v',v'''')\in E_P$);
\item Add vertices $v''$ and $v'''$ to $P$ with the labels of $v_r''$ and $v_r'$, respectively;
\item Remove the edge $(v',v'''')$ from $P$ and add the edges $(v',v''),(v'',v''')$ and $(v''',v'''')$ to $P$.
\end{enumerate}
\item Finally create  pairwise equality constraints between all vertices with the same label.
\end{enumerate}
Thus observe for every edge in $G$ there are two analogous edges in $P$, 
and furthermore if edge $(v,w) \not\in G$ then there is not an analogous edge 
in $P$. Hence $Z_G(2\beta) = Z_P(\mathcal{C},\beta)$.

\begin{figure}[t!]
  \centering
  \includegraphics[width=150mm]{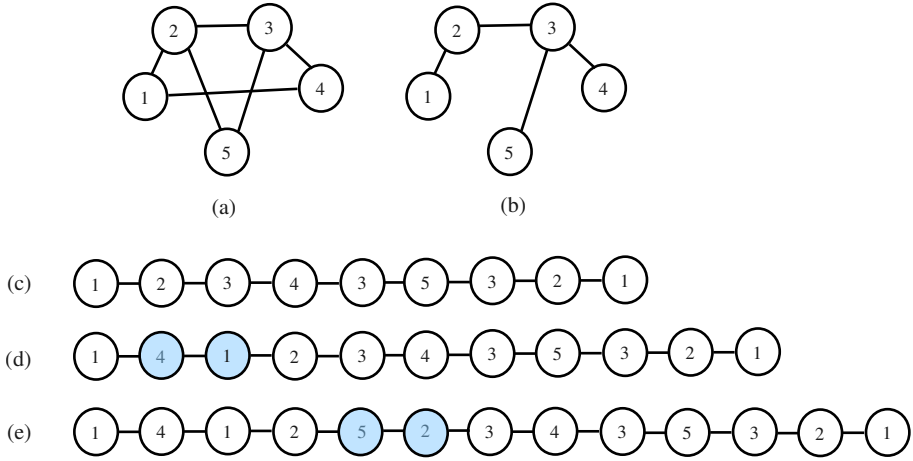}
\vspace{-5in}
\caption{(a) The graph $G$ with vertices labelled $1$ to $5$. (b) A spanning tree $T$ of $G$. Note that $R=\{(1,4),(2,5)\}$.  
(c) The labeled path graph $P_0$.  
(d) Addition of vertices associated with edge (1,4) (the light blue vertices).  
(e) Addition of vertices associated with edge (2,5) (the light blue vertices), 
forming the path graph $P$. Note that every edge in $G$ now has exactly two analogous edges in $P$, 
accounting for all the edges in $P$.
\label{f:ising}}
\end{figure}



\ \\
\noindent{\bf Proof sketch Proposition~\ref{prop:percwin}}\ \
We start off with the Matrix Perceptron bound.
For brevity, we write $X_t$ instead of $X^{\mbox{\scriptsize p}}_t$. Also, let
$\langle A,B\rangle$ be a shorthand for the inner product $\tr(A^TB)$. 
We can write
\begin{align}
\langle X_{t},X_{t}\rangle 
&=\tr((\Psi^+)^T(\be_{i_{t}}-\be_{j_{t}})(\be_{i_t}-\be_{j_t})^T\Psi^+
(\Psi^+)^T(\be_{i_{t}}-\be_{j_{t}})(\be_{i_{t}}-\be_{j_{t}})^T\Psi^+)\notag\\
&=\tr((\be_{i_{t}}-\be_{j_{t}})^T\Psi^+(\Psi^+)^T(\be_{i_t}-\be_{j_t})
(\be_{i_t}-\be_{j_t})^T\Psi^+(\Psi^+)^T(\be_{i_{t}}-\be_{j_{t}}))\notag\\
&=((\be_{i_{t}}-\be_{j_{t}})^T L^+(\be_{i_{t}}-\be_{j_t}))^2\notag\\
&= (R^G_{i_t,j_t})^2\notag\\
&\leq R^2~.\notag
\end{align}
Moreover, for any $k \in \{1,\ldots,K\}$, define $K$ vectors $\bu_1, \ldots, \bu_K \in \R^n$
as follows.
\[
\bu_k = (u_{k,1}, \ldots, u_{k,n})^\top,\qquad {\mbox{with}}\qquad u_{k,i} = [k=y_i]~,
\]
being $y_i$ the label of the $i$-th vertex of $G$. Now,
if we let $U:=\Psi\left( \sum_{k=1}^K \bu_k\bu_k^\top\right) \Psi^\top$, we have 
\begin{align*}
\langle U, X_t\rangle&=\tr(U^TX_t)\\
&=\sum_{k=1}^K\tr(\Psi \bu_k\bu_k^T\Psi^T(\Psi^+)^T(\be_{i_t}-\be_{j_t})(\be_{i_t}-\be_{j_t})^T\Psi^+)\\
&=\sum_{k=1}^K\tr((\be_{i_t}-\be_{j_t})^T\Psi^+\Psi \bu_k\bu_k^T\Psi^T(\Psi^+)^T(\be_{i_t}-\be_{j_t}))\\
&=\sum_{k=1}^K((\be_{i_t}-\be_{j_t})^T\Psi^+\Psi \bu_k)^2~.
\end{align*}
By definition of pseudoinverse, 
$\Psi(\Psi^+\Psi \bu_k)=(\Psi\Psi^+\Psi)\bu_k=\Psi \bu_k$ 
for all $k = 1,\ldots,K$. 
Hence (recall Section \ref{sec:spg}), 
$\Psi^+\Psi \bu_k=\bu_k + c\one$ for some $c\in\mathbb{R}$. 
We therefore have that $(\be_{i_t}-\be_{j_t})^T\Psi^+\Psi \bu_k = u_{k,{i_t}}- u_{k,j_t}$,
i.e.,
\[
\langle U, X_t\rangle = \sum_{k=1}^K(u_{k,i_t}- u_{k,j_t})^2~.
\]
Now, if $y_{i_t,j_t}=0$ (i.e., $y_{i_t}= y_{j_t}$) 
then for all $k$ we have $u_{k,i_t}-u_{k,j_t}=0$, so that $\langle U, X_t\rangle = 0$.
On the other hand, if $y_{i_t,j_t}=1$ (i.e., $y_{i_t} \neq y_{j_t}$) 
then there exist distinct $a,b \in\{1,\ldots, K\}$ such that 
$|u_{a,i_t}-u_{a,j_t}|= |u_{b,i_t}-u_{b,j_t}|=1$, and for all other $k\neq a,b$ 
we have $u_{k,i_t}-u_{k,j_t}=0$. So, in this case  $\langle U, X_t\rangle = 2$.

This gives the linear separability condition of sequence $(X_1,y_{i_1,j_1}), (X_2,y_{i_2,j_2}), \ldots$ w.r.t. $U$.

Finally, we bound $\langle U, U\rangle$. 
Let $\Phi^G_{a,b}:=\{(i,j)\in E\,:\,y_i=a,\,y_j=b \}$. We have:
\begin{align*}
\langle U, U\rangle&=\tr(U^TU)\\
&=\tr\left(\left(\sum_{a=1}^K\Psi\bu_a\bu_a^T\Psi^T\right)\left(\sum_{b=1}^K\Psi\bu_b\bu_b^T\Psi^T\right)\right)\\
&=\sum_{a=1}^K\sum_{b=1}^K\tr(\Psi\bu_a\bu_a^T\Psi^T\Psi\bu_b\bu_b^T\Psi^T)\\
&=\sum_{a=1}^K\sum_{b=1}^K\tr(\bu_b^T\Psi^T\Psi\bu_a\bu_a^T\Psi^T\Psi\bu_a) \\
&=\sum_{a=1}^K\sum_{b=1}^K(\bu_b^T\Psi^T\Psi\bu_a)^2\\
&=\sum_{a=1}^K\left( |\Phi^G_a|^2 + \sum_{b\neq a}|\Phi^G_{a,b}|^2\right)~.
\end{align*}
So, noticing that $\sum_{b\,:\,b\neq a}|\Phi^G_{a,b}|=|\Phi^G_a|$ 
and hence that $\sum_{b\,:\,b\neq a}|\Phi^G_{a,b}|^2\leq|\Phi^G_a|^2$, we conclude that
\[
\langle U, U\rangle \leq 2|\Phi^G|^2~.
\]
With the above handy, the mistake bound on $M^{\mbox{\sc p}}$ easily follows from the standard
analysis of the Perceptron algorithm with nonzero threshold.

By a similar token, the bound on $M^{\mbox{\sc w}}$ follows from the arguments in \cite{w07},
after defining $U$ to be a normalized version of the one we defined above for Matrix Perceptron,
and noticing that $X^{\mbox{\sc w}}_t$ in Algorithm \ref{a:2} are positive semidefinite and 
normalized to trace 1.


\subsection{Missing proofs from Section \ref{s:efficient}}\label{ssa:missingeff}
The following lemma relies on the equivalence between effective resistance $\er^G_{i,j}$ of an
edge $(i,j)$ and its probability of being included in a randomly drawn spanning tree.
\begin{lemma}\label{l:expected}
Let $(G,\by)$ be a labeled graph, and $T$ be a spanning tree of $G$ 
drawn uniformly at random.
Then, for all $k = 1, \ldots, K$, we have:
\begin{enumerate} 
\item $\E[|\Phi^T_k|] =  \sum_{(i,j)\in\Phi^G_k} \er^G_{i,j}$, and
\item $\E[|\Phi^T_k|^2]\leq 2(\sum_{(i,j)\in\Phi^G_k}\er^G_{i,j})^2$~.
\end{enumerate}
\end{lemma}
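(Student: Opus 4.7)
My plan is to prove both parts by combining two ingredients that are either recalled in the excerpt or are classical for the uniform spanning tree measure: first, the identity $\Pr[e \in T] = \er^G_e$ for every $e \in E(G)$ (stated earlier in Section \ref{sec:spg}); second, the negative association of edge indicators under the uniform spanning tree, $\Pr[e \in T,\, e' \in T] \leq \Pr[e \in T]\,\Pr[e' \in T]$ for distinct $e,e' \in E(G)$, which is a standard consequence of the transfer-current theorem (see, e.g., Lyons--Peres, which is already cited).

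For Part 1 the argument is a single line. Since $E(T) \subseteq E(G)$, any cut-edge of $T$ with respect to class $k$ is also a cut-edge of $G$ with respect to class $k$, so $|\Phi^T_k| = \sum_{e \in \Phi^G_k} \mathbf{1}[e \in T]$. Taking expectations and substituting the resistance identity yields $\E[|\Phi^T_k|] = \sum_{e \in \Phi^G_k} \er^G_e$, which I will abbreviate $\mu_k$.

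For Part 2 I would expand the square,
\[
\E[|\Phi^T_k|^2] \;=\; \sum_{e \in \Phi^G_k} \Pr[e \in T] \;+\; \sum_{\substack{e,e' \in \Phi^G_k \\ e \neq e'}} \Pr[e \in T,\, e' \in T],
\]
identify the first sum as $\mu_k$, and invoke negative association to upper bound the second sum by $\sum_{e \neq e'} \er^G_e \er^G_{e'} = \mu_k^2 - \sum_{e}(\er^G_e)^2 \leq \mu_k^2$. This gives the intermediate bound $\E[|\Phi^T_k|^2] \leq \mu_k + \mu_k^2$.

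The only remaining step, and the one requiring a small observation rather than real work, is to absorb the linear term into a quadratic one. If class $k$ is absent or $G$ is monochromatic, then $\Phi^G_k = \emptyset$ and the inequality is $0 \leq 0$. Otherwise, because $G$ is connected and class $k$ co-exists with at least one other class, any spanning tree must contain at least one edge separating a class-$k$ vertex from a non-class-$k$ vertex, so $|\Phi^T_k| \geq 1$ almost surely and hence $\mu_k \geq 1$. Therefore $\mu_k \leq \mu_k^2$, and $\E[|\Phi^T_k|^2] \leq \mu_k + \mu_k^2 \leq 2\mu_k^2$, as required. The genuinely nontrivial ingredient is the negative association inequality for the uniform spanning tree; everything else is bookkeeping and a connectivity remark.
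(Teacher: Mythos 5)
Your proposal is correct and follows essentially the same route as the paper's proof: part 1 via the edge-inclusion probability identity, and part 2 by expanding the square, applying negative correlation of the uniform-spanning-tree edge indicators to bound the cross terms by $\mu_k^2$, and absorbing the linear term using the observation that $\Phi^G_k \neq \emptyset$ forces $|\Phi^T_k| \geq 1$ and hence $\mu_k \geq 1$. No gaps.
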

\begin{proof}
Set $s = |\Phi^G_k|$ and $\Phi^G_k=\{(i_1, j_1),(i_2, j_2), \ldots, (i_{s}, j_{s})\}$. 
Also, for $\ell = 1, \ldots, s$, let $X_{\ell}$ be the random variable which is $1$ if 
$(i_{\ell},j_{\ell})$ is an edge of $T$, and $0$ otherwise.
From $\E[X_{\ell}]=\er^G_{i_{\ell},j_{\ell}}$ we immediately have 1). In order to 
prove 2), we rely on the negative correlation of variables $X_{\ell}$, i.e.,
that $\E[X_{\ell}\,X_{\ell'}] \leq \E[X_{\ell}]\,\E[X_{\ell'}]$ 
for $\ell\neq \ell'$ (see, e.g., \cite{lp10}). Then we can write
\begin{align*}
\mathbb{E}(|\Phi^T_k|^2)
&=\mathbb{E}\left[\left(\sum_{\ell=1}^{s} X_{\ell}\right)^2\right]\\
&=\mathbb{E}\left[\sum_{\ell=1}^{s}\sum_{\ell'=1}^{s}X_{\ell} X_{\ell'}\right]\\
&=\sum_{\ell=1}^{s}\mathbb{E}[X_{\ell}] +\sum_{\ell=1}^{s}\sum_{\ell'\neq \ell}\mathbb{E}[X_{\ell} X_{\ell'}]\\
&\leq\sum_{\ell=1}^{s}\E[X_{\ell}] +\sum_{\ell=1}^{s}\sum_{\ell'\neq \ell}\E[X_{\ell}]\,\E[X_{\ell'}]~.
\end{align*}
Now, for any spanning tree $T$ of $G$,  
if $s \geq 1$ then it must be the case that $|\Phi^T_k|\geq 1$, and hence
\(
\sum_{\ell=1}^{s}\E[X_{\ell}] =\E[|\Phi^T_k|] \geq 1~.\ 
\)
Combined with the above we obtain:
%
%
\[
\E[|\Phi^T_k|^2] 
\leq \left(\sum_{\ell=1}^{s}\E[X_{\ell}]\right)^2 +\sum_{\ell=1}^{s}\sum_{\ell'\neq \ell}\E[X_{\ell}]\,\E[X_{\ell'}]
\leq 2\,\left(\sum_{\ell=1}^{s}\E[X_{\ell}]\right)^2
=2\left(\sum_{\ell=1}^{s} \er^G_{i_{\ell},j_{\ell}}\right)^2~,
\]
as claimed.
\end{proof}

\ \\
\noindent{\bf Proof of Theorem \ref{t:winnowandperceptron} }
From Proposition~\ref{prop:percwin} we have that if we execute Matrix Winnow on $B=B_T$ with 
similarity instances constructed from $\Psi_B$, then the number $M$ of mistakes satisfies
\[
M = \cO\left(|\Phi^{B}|\,D_{B}\,\log n \right)~,
\]
where $D_{B}$ is the resistance diameter of $B$. 
Since $B$ is a tree, its resistance diameter is equal to its diameter, 
which is $\cO(\log n)$. Moreover, $|\Phi^{B}_k| = \cO(|\Phi^{T}_k| \log n)$, for $k = 1,\ldots, K$,
hence $|\Phi^{B}| = \cO(|\Phi^{T}| \log n)$.
Plugging back, taking expectation over $T$, and using Lemma \ref{l:expected}, 1)
proves the Matrix Winnow bound. 
Similarly, if we run the Matrix Perceptron algorithm on $B$ with 
similarity instances constructed from $\Psi_B$ then
\[
M = \cO\left(|\Phi^{B}|^2\,D^2_{B} \right)~.
\]
Proceeding as before, in combination with Lemma \ref{l:expected}, 2), proves the Matrix Perceptron bound.

\ \\
\noindent{\bf Proof of Theorem \ref{t:equivalence} }
First of all, the fact that the algorithm is $\cO(\log^2 n)$ per round easily follows from the fact
that, since $B$ is a balanced binary tree, the sizes of sets $\cP_t$ (prediction step in (\ref{e:predictionstep}))
and  $\cS_t$ (update step in (\ref{e:updatestep})) are both $\cO(\log n)$.

As for initialization time, a naive implementation would require $\cO(n^2)$ 
(we must build the zero matrix $F$). 
We now outline a method of growing a data structure that stores a representation of $F$ online 
for which the initialisation time is only $\cO(n)$, while keeping the per round
time to $\cO(\log^2 n)$.
For every vertex $\ell$ in $B$ the algorithm maintains a subtree $B_{\ell}$ of $B$, 
initially set to $\{\rho\}$, being $\rho$ the root of $B$.
At every vertex $\ell'\in B_{\ell}$ is stored the value $F_{\ell,\ell'}$.
At the start of time $t$, the algorithm climbs $B$ from $i_t$ to $\rho$, 
in doing so storing the ordered list $\cL_{i_t}$ of vertices in the path from $\rho$ to 
$i_t$. The same is done with $j_t$. The set $\cS_t$ is then computed. 
For all $\ell\in\cS_t$, the tree $B_{\ell}$ is then extended to include the vertices in 
$\cN_t$ and the path from $i_t$ (note that for each $\ell\in\cS_t$ this takes only 
$\cO(\log n)$ time, since we have the list $\cL_{i_t}$). 
Whenever a new vertex $\ell'$ is added to $B_{\ell}$, the value $F_{\ell,\ell'}$ is set to zero.
Hence, we initialize $F$ ``on demand", the only initialization step being the allocation
of the BST, i.e., $\cO(n)$ time.

We now continue by showing the equivalence of the sequence of predictions
issued by (\ref{e:predictionstep}) 
to those of the Matrix Perceptron algorithm with similarity instances constructed from $\Psi_B$.

For every $\ell\in\cS_t$ define $\Lambda_t(\ell)$ as the maximal subtree of $B$ that contains 
$\ell$ and does not contain any nodes in $\cP_t\setminus\{i_t,j_t\}$. 
\begin{lemma}\label{lamfacts}
$\Lambda_t(\cdot)$ defined above enjoys the following properties (see Figure \ref{f:lambda}, left, for reference).
\begin{enumerate}
\item For all $\ell$, $\Lambda_t(\ell)$ is uniquely defined;
\item Any subtree $T$ of $B$ that has no vertices from $\cP_t\setminus\{i_t,j_t\}$ 
(and hence any of the trees $\Lambda_t$) contains at most one vertex from $\cS_t$;
\item The subtrees $\{\Lambda_t(\ell)\,:\,\ell\in\cS_t\}$ are pairwise disjoint;
\item The set $\{\Lambda_t(\ell)\,:\,\ell\in\cS_t\}\cup(\cP_t\setminus\{i_t,j_t\})$ covers $B$ 
(so in particular $\{\Lambda_t(\ell)\,:\,\ell\in\cS_t\}$ covers the set of leaves of $B$).
\end{enumerate}
\end{lemma}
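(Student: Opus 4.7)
The four items are all structural consequences of working inside the tree $B$ and removing the vertex set $\cQ_t := \cP_t \setminus \{i_t,j_t\}$. I would set up the unifying observation that, by construction, $\Lambda_t(\ell)$ is nothing more than the connected component of $\ell$ in the forest $B \setminus \cQ_t$; items (1)--(4) would then reduce to standard facts about removing a vertex set from a tree.

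For (1), I would argue that a maximal connected subgraph of $B$ which contains $\ell$ and avoids $\cQ_t$ is exactly the connected component of $\ell$ in $B \setminus \cQ_t$, and hence uniquely determined. For (3), once (2) is in hand, I would note that if $\Lambda_t(\ell) \cap \Lambda_t(\ell')$ were nonempty, then their union would be a connected subtree of $B$ avoiding $\cQ_t$ and containing both $\ell, \ell' \in \cS_t$, contradicting (2); so (3) is an immediate corollary. For (4), I would take any $w \in V(B) \setminus \cQ_t$ and look at its component $C$ in $B \setminus \cQ_t$. Since $B$ is connected and $\cQ_t \ne \emptyset$ (as $|\cP_t|\ge 3$ for distinct leaves $i_t \ne j_t$ in a BST of depth at least one), some boundary edge of $C$ must exist; in a tree this forces a vertex $\ell \in C$ that is adjacent to $\cQ_t$, and thus $\ell \in \cN_t \setminus \cQ_t = \cS_t$. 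Hence $w \in C = \Lambda_t(\ell)$.

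The crux, and where I expect the real work to lie, is item (2). Here I would prove the stronger statement that for any two distinct $\ell,\ell' \in \cS_t$, the (unique) path in $B$ between $\ell$ and $\ell'$ passes through some vertex of $\cQ_t$; (2) would then follow because a connected subtree of $B$ that contains $\ell$ and $\ell'$ must contain the entire $\ell$-to-$\ell'$ path. I would split the argument according to which of $\ell,\ell'$ lie in $\{i_t,j_t\}$ versus off the path. If $\{\ell,\ell'\}=\{i_t,j_t\}$, then the path from $\ell$ to $\ell'$ is exactly $\cP_t$, which contains $\cQ_t$ (nonempty by the paragraph above). If say $\ell=i_t$ and $\ell' \notin \cP_t$ is adjacent to $u \in \cP_t$, then the $\ell$-to-$\ell'$ path runs $i_t \to p(i_t)\to\cdots\to u\to \ell'$, and $p(i_t)\in \cQ_t$ since $p(i_t)$ is internal and hence distinct from the leaves $i_t,j_t$. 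Finally, if both $\ell,\ell'$ are off-path neighbors of (possibly equal) path vertices $u,u'$, I would observe that in a binary tree the off-path neighbor of any $u \in \cP_t$ is unique, so $u=u'$ forces $\ell=\ell'$; and if $u\ne u'$, the path $\ell,u,\ldots,u',\ell'$ traverses path-internal vertices, which all lie in $\cQ_t$ (since $u,u'$ cannot be $i_t$ or $j_t$ — those are leaves whose only neighbor is already in $\cP_t$, so their only potential off-path neighbor would lie in $\cQ_t$, contradicting $\ell,\ell' \notin \cP_t$).

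The main obstacle is therefore the bookkeeping in (2): making sure that the endpoint case (where $i_t$ or $j_t$ themselves belong to $\cS_t$) is treated correctly, and that the argument uses that $B$ is a \emph{binary} tree so each on-path vertex has at most one off-path neighbor. Once this is cleanly done, (1), (3), (4) fall out quickly from elementary tree-decomposition facts.
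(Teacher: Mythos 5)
Your proposal is correct and follows essentially the same route as the paper's proof: items (1), (3), (4) via maximality/connectivity of subtrees of $B$ avoiding $\cP_t\setminus\{i_t,j_t\}$, and item (2) by showing the unique $\ell$-to-$\ell'$ path must pass through an interior vertex of $\cP_t$. Your packaging of $\Lambda_t(\ell)$ as the connected component of $\ell$ in $B$ minus $\cP_t\setminus\{i_t,j_t\}$ is a clean restatement of what the paper's maximality arguments establish, and your case analysis in (2) — in particular handling the cases $\ell\in\{i_t,j_t\}$ and using that a binary tree gives each path vertex at most one off-path neighbour — is if anything more explicit than the paper's.
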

\begin{proof}
\begin{enumerate}
\item Suppose we have subtrees $T$ and $T'$ with $T\neq T'$ that both satisfy the conditions of 
$\Lambda_t(\ell)$. Then w.l.o.g assume there exists a vertex $\ell'$ in $T$ that is not in $T'$. 
Since $T$ and $T'$ are both connected and both contain $\ell$, the subgraph $T\cup T'$ of $B$ 
is connected and is hence a subtree. Since neither $T$ nor $T'$ contains vertices in 
$\cP_t\setminus\{i_t,j_t\}$, $T\cup T'$ does not contain any such either. 
Hence, because $T'$ is a strict subtree of $T\cup T'$, we have contradicted the maximality of $T'$.
\item Suppose $T$ has distinct vertices $\ell, \ell'\in\cS_t$. 
Since $T$ is connected, it must contain the path in $B$ from $\ell$ to $\ell'$. 
This path goes from $\ell$ to the neighbor of $\ell$ that is in $\cP_t\setminus\{i_t,j_t\}$, 
then follows the path $\cP_t\setminus\{i_t,j_t\}$ (in the right direction) until a neighbor of $\ell'$ 
is reached. The path then terminates at $\ell'$. Such a path contains at least one vertex in 
$\cP_t\setminus\{i_t,j_t\}$, contradicting the initial assumption about $T$.
\item Assume the converse -- that there exist distinct $\ell, \ell'$ in $\cS_t$ 
such that $\Lambda_t(\ell)$ and $\Lambda_t(\ell')$ share vertices. 
Then, since $\Lambda_t(\ell)$ and $\Lambda_t(\ell')$ are connected, 
$\Lambda_t(\ell)\cup\Lambda_t(\ell')$ must also be connected (and hence must be a subtree of $B$). 
Since $\Lambda_t(\ell)\cup\Lambda_t(\ell')$ shares no vertices with $\cP_t\setminus\{i_t,j_t\}$, 
and contains both $\ell$ and $\ell'$ (which are both in $\cS_t$), the statement in Item 2 above
is contradicted.
\item Assume that we have a $\ell\in B\setminus(\cP_t\setminus\{i_t,j_t\})$. 
Then let $P'$ be the path from $\ell$ to the (first vertex encountered in) the path $\cP_t\setminus\{i_t,j_t\}$. 
Let $\ell'$ be the second from last vertex in $P'$. 
Then $\ell'$ is a neighbor of a vertex in $\cP_t$, but is not in $\cP_t\setminus\{i_t,j_t\}$, so it must be 
in $\cS_t$. This implies that the path $P''$ that goes from $\ell$ to $\ell'$ 
contains no vertices in $\cP_t\setminus\{i_t,j_t\}$ and is therefore (Item 1) 
a subtree of $\Lambda_t(\ell')$. Hence, $\ell\in\Lambda_t(\ell')$.
\end{enumerate}
\end{proof}
\begin{figure}[h!]
\centering
\includegraphics[width=0.9\textwidth]{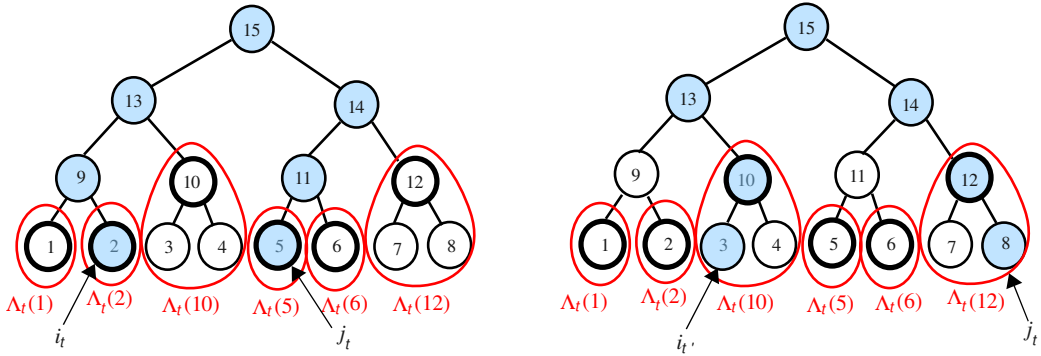}
\vspace{-5.3in}
\caption{ 
{\bf Left: } The same BST as in Figure \ref{f:table} with $i_t =2$ and $j_t = 5$. 
Light blue vertices are those in $\cP_t$. 
Thick-bordered vertices are those in $\cS_t$. 
Since vertices $3$ and $4$ are in $\Lambda_t(10)$, we have $\tilde{f}_t(3) = \tilde{f}_t(4)=f_t(10)$. 
Since vertices $7$ and $8$ are in $\Lambda_t(12)$, we have $\tilde{f}_t(7) = \tilde{f}_t(8)=f_t(12)$. 
For all other vertices $\ell$, we have $\tilde{f}_t(\ell)=f_t(\ell)$. 
{\bf Right: } The same BST as in Figure \ref{f:table} with path $\cP_{t'}$ (light blue vertices)
having endpoints $i_{t'} = 3$ and $j_{t'} = 8$. Thick-bordered vertices are still those in $\cS_t$. 
Path $\cP_{t'}$ intersects $\cS_t$ at two vertices, $10$ and $12$, which means that 
$3\in\Lambda_t(10)$ and $8\in\Lambda_t(12)$. We have $\tilde{f}_t(3)=f_t(10)$, 
$\tilde{f}_t(8)=f_t(12)$, and $((\be_3-\be_8)L^+(\be_2-\be_5))^2
=(\tilde{f}_t(3)-\tilde{f}_t(8))^2 =(f_t(10)-f_t(12))^2$.
\label{f:lambda}
}
\vspace{-0.1in}
\end{figure}

\begin{lemma}\label{l:lambda}
Let $L$ be the Laplacian matrix of $B$, and $\ell, \ell' \in \cS_t$. Then
for any pair of vertices $\kappa$ and $\kappa'$ of $B$ with $\kappa\in\Lambda_t(\ell)$ 
and $\kappa'\in\Lambda_t(\ell')$ we have
%
\[
(\be_{\kappa}-\be_{\kappa'})^T L^+(\be_{i_t}-\be_{j_t})= f_t(\ell')- f_t(\ell)~,
\]
where $\be_i$ is the $i$-th element in the canonical basis of $\R^{2n-1}$.
\end{lemma}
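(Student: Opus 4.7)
The plan is to interpret $(\be_c-\be_d)^\top L^+(\be_a-\be_b)$ combinatorially, using the tree structure of $B$. Since $L=\Psi_B^\top\Psi_B$ and $L^+=\Psi_B^+(\Psi_B^+)^\top$,
\[
(\be_c-\be_d)^\top L^+(\be_a-\be_b)=\langle (\Psi_B^+)^\top(\be_c-\be_d),\,(\Psi_B^+)^\top(\be_a-\be_b)\rangle.
\]
In a tree, $(\Psi_B^+)^\top(\be_a-\be_b)\in\R^m$ is the unit flow from $a$ to $b$: it is supported on the edges of the unique $a$-to-$b$ path, taking value $+1$ on edges whose stored orientation agrees with the traversal $a\to b$ and $-1$ otherwise. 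The inner product of two such flow vectors is then the signed edge-overlap between the two tree-paths, where a common edge contributes $+1$ if the two traversals align and $-1$ if they oppose.

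I would then apply this with $(a,b)=(i_t,j_t)$ and $(c,d)=(\kappa,\kappa')$, and locate the overlap between the $B$-paths $\kappa\rightsquigarrow\kappa'$ and $\cP_t$. Writing $\cP_t=(p_1,\ldots,p_s)$ with $p_1=i_t$ and $p_s=j_t$ (so that $f_t(p_r)=r$), the key step is to show that the $\cP_t$-edges traversed by $\kappa\rightsquigarrow\kappa'$ are exactly those of the sub-path from $p_{f_t(\ell)}$ to $p_{f_t(\ell')}$. To see this, decompose $\kappa\rightsquigarrow\kappa'$ as: a segment inside $\Lambda_t(\ell)$ from $\kappa$ to $\ell$; the bridge edge $(\ell,n_\ell)$ if $\ell\notin\{i_t,j_t\}$; the $\cP_t$-sub-path from $p_{f_t(\ell)}$ to $p_{f_t(\ell')}$; the bridge edge $(n_{\ell'},\ell')$ if $\ell'\notin\{i_t,j_t\}$; and a segment inside $\Lambda_t(\ell')$ from $\ell'$ to $\kappa'$. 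The $\Lambda_t$-segments carry no $\cP_t$-edge, because every edge of $\cP_t$ has at least one endpoint in $\cP_t\setminus\{i_t,j_t\}$, while each $\Lambda_t$-tree avoids that set by definition; the bridge edges are not in $\cP_t$ either, since $\ell,\ell'\notin\cP_t\setminus\{i_t,j_t\}$ by construction of $\cS_t$.

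Once the overlap has been localized to the $\cP_t$-sub-path, computing its signed value is straightforward. Orienting $\cP_t$ from $p_1$ to $p_s$ (the direction of $i_t\rightsquigarrow j_t$), each step of $p_{f_t(\ell)}\rightsquigarrow p_{f_t(\ell')}$ moving in the $p_1\to p_s$ direction contributes $+1$ to the overlap and each backward step contributes $-1$. Summing yields $f_t(\ell')-f_t(\ell)$ in all three cases $f_t(\ell)<f_t(\ell')$, $f_t(\ell)>f_t(\ell')$, or $f_t(\ell)=f_t(\ell')$; the last case includes $\ell=\ell'$, where the sub-path is a single vertex and the overlap is $0$.

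The main obstacle is the localization step, and in particular the edge cases $\ell\in\{i_t,j_t\}$ (where the bridge edge on that side is absent and the entry point on $\cP_t$ is $\ell$ itself, at position $f_t(\ell)=1$ or $s$), and the symmetric situation at the $\ell'$ end. These are handled by a routine appeal to Lemma~\ref{lamfacts}: the pairwise disjointness of the $\Lambda_t$'s, together with the fact that $\{\Lambda_t(\ell)\,:\,\ell\in\cS_t\}\cup(\cP_t\setminus\{i_t,j_t\})$ covers $B$, guarantees that $\kappa$ and $\kappa'$ lie in the asserted $\Lambda_t$-components and that the $B$-path between them admits exactly the decomposition above.
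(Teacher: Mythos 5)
Your proof is correct, but it takes a genuinely different route from the paper's. You work in edge (flow) space: writing $L^+=\Psi_B^+(\Psi_B^+)^\top$, you identify $(\Psi_B^+)^\top(\be_a-\be_b)$ with the signed indicator of the unique $a$-to-$b$ path in the tree, so the bilinear form becomes a signed count of edges shared by the two paths; Lemma~\ref{lamfacts} then localizes that overlap to the segment of $\cP_t$ between positions $f_t(\ell)$ and $f_t(\ell')$, and the telescoping count gives $f_t(\ell')-f_t(\ell)$. The paper instead works in vertex (potential) space: it extends the tags to a function $\tilde{f}_t$ on all of $B$ that is constant on each $\Lambda_t(\cdot)$, verifies vertex by vertex that $L\tilde{f}_t=\be_{j_t}-\be_{i_t}$, deduces $\tilde{f}_t=L^+(\be_{j_t}-\be_{i_t})+c\one$, and reads off the identity by subtracting two coordinates. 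The two arguments are electrical duals relying on the same combinatorial input from Lemma~\ref{lamfacts}; yours invokes the standard path-intersection formula for $L^+$ on a tree and makes the sign bookkeeping transparent, while the paper's constructs the explicit harmonic extension $\tilde{f}_t$, which it reuses in the surrounding discussion (e.g., Figure~\ref{f:lambda}). Two points deserve to be made explicit in a full write-up: (i) the flow identification itself --- the path indicator $\bff$ satisfies $\Psi_B^\top\bff=\be_a-\be_b$ by telescoping, and since $B$ is a tree the cycle space $\ker\Psi_B^\top$ is trivial, so $\bff$ is the unique solution and hence equals $(\Psi_B^+)^\top(\be_a-\be_b)$ (using that $\be_a-\be_b\perp\one$); and (ii) your claim that every edge of $\cP_t$ meets $\cP_t\setminus\{i_t,j_t\}$, which holds because $i_t$ and $j_t$ are distinct leaves of $B$ and therefore non-adjacent.
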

\begin{proof}
We first extend the tagging function $f_t$ to all vertices of $B$ via 
the vector\footnote
{
In our notation, we interchangeably view $\tilde{f}$ both as a tagging 
function from the $2n-1$ vertices of $B$ to the natural numbers and as a $(2n-1)$-dimensional vector.
} 
$\tilde{f}_t$ as follows (note that, by Lemma \ref{lamfacts}, $\tilde{f}_t$ is well defined):
\begin{enumerate}
\item For all $\ell\in\cP_t\setminus\{i_t,j_t\}$, set $\tilde{f}_t(\ell) =f_t(\ell)$;
\item For all $\ell'\in\cS_t$ and $\ell\in\Lambda_t(\ell')$, set $\tilde{f}_t(\ell) = f_t(\ell')$.
\end{enumerate}
\begin{claim}\label{c:1}
$L\tilde{f}_t = \be_{j_t}-\be_{i_t}$.
\end{claim}
\noindent{\bf Proof of claim.} For any vertex $\kappa$ of $B\setminus\{i_t,j_t\}$ one of the following holds:
\begin{enumerate}
\item If $\kappa\in\cP_t$, then $\kappa$ has a neighbor $\kappa_1$ with 
$\tilde{f}_t(\kappa_1) = \tilde{f}_t(\kappa)-1$, one neighbour $\kappa_2$ with 
$\tilde{f}_t(\kappa_2)=\tilde{f}_t(\kappa)+1$, and (unless $\kappa$ is the root of $B$)
one neighbour $\kappa_3$ with $\tilde{f}_t(\kappa_3)= \tilde{f}_t(\kappa)$. 
We therefore have that 
$[L\tilde{f}_t]_{\kappa} = 3\tilde{f}_t(\kappa) 
- \tilde{f}_t(\kappa_1)-\tilde{f}_t(\kappa_2)- \tilde{f}_t(\kappa_3) = 0$.
\item If $\kappa\in\cN_t\setminus\cP_t$, then $\kappa$ has one neighbor $\kappa_1$ in $\cP_t$ and we have 
$\tilde{f}_t(\kappa_1)= \tilde{f}_t(\kappa)$. 
Let $T_{\kappa}$ be the subtree of $B$ containing exactly vertex $\kappa$ and all 
neighbors of $\kappa$ bar $\kappa_1$. 
Since $\cP_t$ is connected, it contains $\kappa_1$ and does not contain $\kappa$, 
none of the other neighbors of $\kappa$ being in $\cP_t$. 
Hence $T_{\kappa}$ is a subtree of $B$ that contains $\kappa$ and no vertices from 
$\cP_t\setminus\{i_t,j_t\}$, and so by Lemma \ref{lamfacts}, item $1$ it must be a subtree of $\Lambda_t(\kappa)$. 
Hence, by definition of $\tilde{f}_t$, all vertices $\kappa_2$ in $T_{\kappa}$ satisfy 
$\tilde{f}_t(\kappa_2)= \tilde{f}_t(\kappa)$. This implies that for all neighbors $\kappa_3$ of $\kappa$
we have $\tilde{f}_t(\kappa_3)= \tilde{f}_t(\kappa)$, which in turn gives $[L\tilde{f}_t]_k = 0$.
\item If $\kappa\notin\cN_t$ then, by Lemma \ref{lamfacts} item $4$, let $\kappa$ be contained in 
$\Lambda_t(\ell)$ for some $\ell\in\cS_t$. Let $T_{\kappa}$ be the subtree of $B$ containing exactly 
vertex $\kappa$ and all neighbors of $\kappa$. Note that $T_{\kappa}$ 
is a subtree of $B$ that contains $\kappa$ and no vertices from $\cP_t\setminus\{i_t,j_t\}$. 
Since $\Lambda_t(\ell)$ also contains $\kappa$ (hence $\Lambda_t(\ell)\cup T_{\kappa}$ is connected), 
we have that $\Lambda_t(\ell)\cup T_{\kappa}$ is a subtree of $B$ that contains 
$\ell$ and no vertices from $\cP\setminus\{i_t,j_t\}$. By Lemma \ref{lamfacts} item $1$, this implies that
$\Lambda_t(\ell)\cup T_{\kappa}$ is a subtree of (and hence equal to) $\Lambda_t(\ell)$. 
Hence, by definition of $\tilde{f}_t$, we have that $\tilde{f}_t$ is identical on $T_{\kappa}$. 
Thus all neighbors $\kappa_1$ of $\kappa$ satisfy $\tilde{f}_t(\kappa_1)= \tilde{f}_t(\kappa)$, implying again 
$[L\tilde{f}_t]_{\kappa} = 0$.
\end{enumerate}
So in either case $[L\tilde{f}_t]_{\kappa} = 0$.

Finally, let $i'_t$ be the neighbor of $i_t$ in $B$. 
We have $[L\tilde{f}_t]_{i_t} = \tilde{f}_t(i_t) - \tilde{f}_t(i'_t) = 1-2=-1$. Similarly, we have
$[L\tilde{f}_t]_{j_t} = 1$. Putting together, we have shown that $L\tilde{f}_t=\be_{j_t}-\be_{i_t}$,
thereby concluding the proof of Claim \ref{c:1}.

Now, by definition of pseudoinverse,
\[
L\tilde{f}_t =LL^+L\tilde{f}_t =LL^+(\be_{j_t}-\be_{i_t})~.
\]
This mplies that $L(\tilde{f}_t-L^+(\be_{j_t}-\be_{i_t}))=0$. Therefore (see Section \ref{sec:spg})
there exists a constant $c$ such that $\tilde{f}_t = L^+(\be_{j_t}-\be_{i_t})+c\one$. 
From the definition of $\tilde{f}$ we can write
\begin{align*}
{f}_t(\ell')-{f}_t(\ell) 
& = \tilde{f}_t(\kappa') - \tilde{f}_t(\kappa)\\
& =([L^+(\be_{j_t}-\be_{i_t})]_{\kappa'} -c) -([L^+(\be_{j_t}-\be_{i_t})]_{\kappa}-c)\\
& =(\be_{\kappa}-\be_{\kappa'})^T L^+(\be_{i_t}-\be_{j_t}),
\end{align*}
as claimed.
\end{proof}
\begin{lemma}\label{l:twocase}
Let $L$ be the Laplacian matrix of $B$, and $\kappa, \kappa'$ be two vertices of $B$. 
Let $\cP$ be the path from $\kappa$ to $\kappa'$ in $B$. 
Then for any $t$ either $|\cP\cap\cS_t|\leq 1$ or $\cP\cap\cS_t = \{\ell,\ell'\}$,
for two distinct vertices $\ell$ and $\ell'$. No other cases are possible. Moreover,
\[
((\be_{\kappa}-\be_{\kappa'})^T L^+(\be_{i_t}-\be_{j_t}))^2
=
\begin{cases}
0                   & {\mbox{if $|\cP\cap\cS_t|\leq 1$}}\\
(f_t(\ell)-f_t(\ell'))^2   & {\mbox{if $\cP\cap\cS_t = \{\ell,\ell'\}$}}~.
\end{cases}
\]
\end{lemma}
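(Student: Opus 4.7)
The plan is to derive the lemma from two inputs already established en route to Lemma~\ref{l:lambda}: the pseudoinverse identity for the tag vector $\tilde{f}_t$ and the partition of $B$ into the anchored subtrees. Concretely, inside the proof of Lemma~\ref{l:lambda} it was shown that $L\tilde{f}_t=\be_{j_t}-\be_{i_t}$, which together with the one-dimensional null space of the Laplacian (see Section~\ref{sec:spg}) gives $\tilde{f}_t = L^+(\be_{j_t}-\be_{i_t}) + c\,\one$ for some scalar $c$. Consequently, for any two vertices $\kappa,\kappa'$ of $B$ the constant cancels and
\[
(\be_{\kappa} - \be_{\kappa'})^T L^+(\be_{i_t} - \be_{j_t}) = \tilde{f}_t(\kappa') - \tilde{f}_t(\kappa),
\]
so the analytic half of the lemma reduces to reading off $\tilde{f}_t$ at the two endpoints.

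For the combinatorial half, I would use the partition of $B$ into the pairwise disjoint subtrees $\{\Lambda_t(\ell)\}_{\ell\in\cS_t}$ together with $\cP_t\setminus\{i_t,j_t\}$ (items~3 and~4 of Lemma~\ref{lamfacts}), plus the uniqueness-of-anchor fact that inside each $\Lambda_t(\ell)$ the vertex $\ell$ is the \emph{only} vertex adjacent to $\cP_t\setminus\{i_t,j_t\}$ outside the subtree. This anchor-uniqueness step is the one I expect to be the main obstacle: two distinct anchors would produce a cycle in $B$ (close the loop via a subpath inside $\Lambda_t(\ell)$ and a subpath along $\cP_t$), so it must be argued from the tree property of $B$ together with the maximality of $\Lambda_t(\ell)$, while also handling the corner cases $\ell=i_t$ or $\ell=j_t$, where $\Lambda_t(\ell)$ collapses to $\{\ell\}$ and the anchor is $\ell$ itself. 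Granted anchor-uniqueness, the unique path $\cP$ from $\kappa\in\Lambda_t(\ell)$ to $\kappa'\in\Lambda_t(\ell')$ in $B$ is forced to leave $\Lambda_t(\ell)$ through $\ell$, traverse a (possibly empty) stretch of $\cP_t\setminus\{i_t,j_t\}$, and enter $\Lambda_t(\ell')$ through $\ell'$.

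With this structure in hand, the two cases become immediate. If $\ell=\ell'$ then $\cP$ stays inside $\Lambda_t(\ell)$, so by item~2 of Lemma~\ref{lamfacts} we have $|\cP\cap\cS_t|\leq 1$; moreover $\tilde{f}_t$ is constant equal to $f_t(\ell)$ on $\Lambda_t(\ell)$, so the squared quantity vanishes. If $\ell\neq\ell'$, then the forced structure of $\cP$ gives $\cP\cap\cS_t=\{\ell,\ell'\}$ exactly, and since $\tilde{f}_t(\kappa)=f_t(\ell)$ and $\tilde{f}_t(\kappa')=f_t(\ell')$ the squared value is $(f_t(\ell)-f_t(\ell'))^2$. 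No other cases arise because $\kappa$ and $\kappa'$, being in the intended application the leaves $i_{t'}, j_{t'}$ of $B$, each lie in exactly one block $\Lambda_t(\ell)$ of the partition, which exhausts the possibilities claimed in the lemma.
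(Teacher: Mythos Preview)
Your proposal is correct and follows essentially the same route as the paper: use item~4 of Lemma~\ref{lamfacts} to place $\kappa,\kappa'$ in blocks $\Lambda_t(\ell),\Lambda_t(\ell')$, trace the path $\cP$ through these blocks, and read off the value from Lemma~\ref{l:lambda} (equivalently, from the $\tilde f_t$ identity you isolate). The anchor-uniqueness step you flag as the main obstacle is not one---it is immediate from item~2 of Lemma~\ref{lamfacts} together with the maximality of $\Lambda_t(\ell)$, and the paper simply cites item~2 rather than running your cycle argument.
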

\begin{proof}
By Lemma \ref{lamfacts} item $4$, we have two possible cases only:
\begin{enumerate}
\item There exists $\ell\in\cS_t$ such that  both $\kappa$ and $\kappa'$ are in $\Lambda_t(\ell)$: 
In this case (since $\Lambda_t(\ell)$ is connected) the path $\cP$ lies in $\Lambda_t(\ell)$. 
Since, by Lemma \ref{lamfacts} item $2$, no $\ell'\in\cS_t$ with $\ell'\neq \ell$ 
can be in $\Lambda_t(\ell)$, it is only ever possible that $\cP$ contains at most one vertex 
$\ell$ (if any) of $\cS_t$.
\item There exist two distinct nodes $\ell, \ell'\in\cS_t$ such that $\kappa\in\Lambda(\ell)$ 
and $\kappa'\in\Lambda(\ell')$. In this case, $\cP$ corresponds to the following path: 
First go from $\kappa$ to $\ell$ (by Lemma \ref{lamfacts} item $2$, since this path lies 
in $\Lambda(\ell)$ the only vertex in $\cS_t$ that lies in the section of the path is $\ell$); 
then go to the neighbor of $\ell$ that is in $\cP_t\setminus\{i_t, j_t\}$; then follow the path 
$\cP_t\setminus\{i_t, j_t\}$ until you reach the neighbor of $\ell'$ (this section of $\cP$ 
contains no vertices in $\cS_t$); then go from $\ell'$ to $\kappa$ 
(by Lemma \ref{lamfacts} item $2$, since this path lies in $\Lambda(\ell')$ 
the only vertex in $\cS_t$ that lies in this section of the path is $\ell'$). 
Thus, $\cP\cap\cS_t=\{\ell,\ell'\}$.
\end{enumerate}
The result then follows by applying Lemma \ref{l:lambda} to the two cases above.
\end{proof}
Figure \ref{f:lambda} illustrates the above lemmas by means of an example.

To conclude the proof, let $\langle A,B\rangle$ be a shorthand for $\tr(A^\top B)$.
We see that from Algorithm \ref{a:2}, Lemma \ref{l:twocase}, and the definition of $F$ in 
(\ref{e:predictionstep}) we can write
\begin{align*}
\langle W_{t},X_{t}\rangle 
&= \sum_{t'=1, t' \in \cM}^{t-1} (2y_{i_{t'},j_{t'}}-1) \langle X_{t'},X_{t}\rangle\\
&= \sum_{t'=1, t' \in \cM}^{t-1} (2y_{i_{t'},j_{t'}}-1) ((\be_{i_{t'}}-\be_{j_{t'}})^T L^+(\be_{i_t}-\be_{j_t}))^2\\
&= \sum_{(\ell,\ell')\in\cP_{t}^2}F_{\ell,\ell'}~,
\end{align*}
where $\cM$ is the set of mistaken rounds, and the second-last equality
follows from a similar argument as the one contained in the proof of Proposition
\ref{prop:percwin}.
Threshold $2\log n$ in (\ref{e:predictionstep}) is an upper bound on the radius squared
$\langle X_t,X_t \rangle$ of instance matrices (denoted by $R^2$ in Algorithm \ref{a:2}). 
In fact, from the proof of Proposition \ref{prop:percwin},
\[
\max_t \langle X_t,X_t \rangle \leq \max_{(i,j) \in V^2} ((\be_i-\be_j)^\top L^+ (\be_i-\be_j))^2 
= \max_{(i,j) \in V^2} (R^G_{i,j})^2~,
\]
which is upper bounded by the square of the diameter $2\log n$ of $B$.


\subsection{Ancillary results, and missing proofs from Section \ref{s:unknown}}\label{ssa:unknown}
This section contains the proof of Theorem \ref{t:unknown}, along with preparatory results.

\subsection*{A digression on cuts and directed paths}
Given\footnote
{
The reader familiar with the theory of matroids will recognize what is recalled here
as a well known example of a regular matroid on graphs.
One can learn about them in standard textbooks/handbooks, e.g., \cite[Ch.6]{gy03}.
} 
a connected and unweighted graph $G = (V,E)$, with $n = |V|$ vertices and $m = |E|$ edges,
any partition of $V$ into two subsets induces a {\em cut} over $E$. 
A cut is a {\em cutset} if it is induced by a two-connected component partition of $V$.
Fix now any spanning tree $T$ of $G$ (this will be the one constructed by the algorithm
at the end of the game, based on the paths produced by the adversary -- see 
Section \ref{ss:unknownaa}). 
In this context, the $n-1$ edges of $T$ are often called {\em branches}
and the remaining $m-n+1$ edges are often called {\em chords}.
Any branch of $T$ cuts the tree into two components (it is therefore a cutset), 
and induces a two-connected component partition over $V$. 
Any such cutset is called a {\em fundamental cutset} of $G$ (w.r.t. $T$). 
Cuts 
are 
always subsets of $E$, hence they can naturally be represented as (binary) 
indicator vectors with $m$ 
components. For reasons that will be clear momentarily, it is also convenient 
to assign each edge an 
orientation (tail vertex to head vertex) 
and each cut an inward/outward direction. 
In particular, it is customary to give 
a fundamental cut the orientation of its branch.
As a consequence of orientations/directions, cuts 
are rather represented as $m$-dimensional 
vectors whose components have values in $\{-1,0,1\}$.
%
%

\begin{figure}[t!h!]
\begin{picture}(20,290)(20,290)
\scalebox{0.75}{\includegraphics{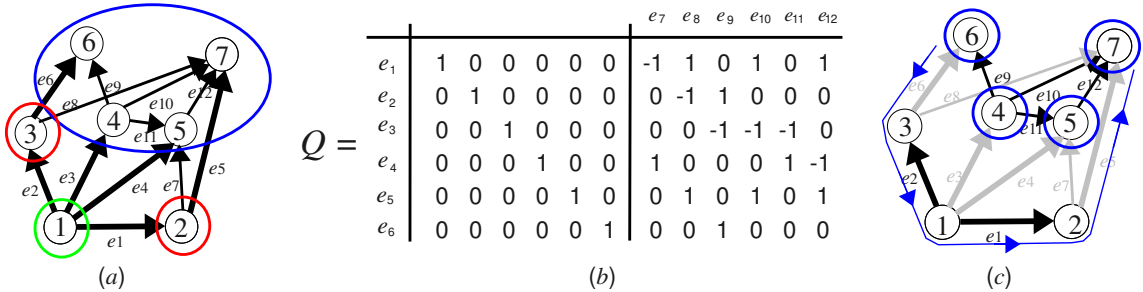}}
\end{picture}
\vspace{-2.15in}
\caption{{\bf (a)} A graph with oriented edges, tagged $e_1$ through $e_{12}$. 
A spanning tree $T$ is denoted by the thick edges. 
The cutset $\{4,5,6,7\}$ is depicted in blue. This cut has inward branches 
$e_3$, $e_4$, $e_5$, and $e_6$, and no outward branches. 
The red cutset separating vertex $3$ from the rest has inward branch $e_2$, and outward
branch $e_6$. Two other (nodal) cutsets are shown: one separating vertex
$2$ from the rest, and the other separating vertex $1$ from the rest.
{\bf (b)} The fundamental cutset matrix
associated with the chosen spanning tree.
Viewed as a 12-dimensional vector, 
%
%
the blue cut is $q_{\{4,5,6,7\}} =  (0,0,-1,-1,-1,-1,-1,-1,0,0,0,0)$, and can be represented 
as linear combination of rows $Q_i$ of 
$Q$ as $-Q_3-Q_4-Q_5-Q_6$, i.e.,
by the vector of coefficients $\bu_{\{4,5,6,7\}} = (0,0,-1,-1,-1,-1)^\top$. 
Notice that chords $e_7$ and $e_8$ are both inward (coefficients $-1$ in $q_{\{4,5,6,7\}}$).
%
{\bf (c)} 
The connectivity structure induced by the selected spanning tree on the blue cutset in (a).
For ease of reference, all edges in the blue cut have been turned into light gray.
Vertices $4$, $5$, $6$, and $7$ are all connected in $G$ under the blue cut, but are they
are {\em all disconnected} in $T$.
The path $6 \rightarrow 3 \rightarrow 1 \rightarrow 2 \rightarrow 7$ (depicted in blue)
connects in $T$ vertex $6$ to vertex $7$, and
is represented by path $\bp = (1,-1,0,0,1,-1,0,0,0,0,0,0)^\top$, hence $Q\bp = (1,-1,0,0,1,-1)^\top$.
This path departs from the blue (disconnected) cluster $\{4,5,6,7\}$ through edge $e_6$ (traversed ``the wrong way") 
and returns to this set via $e_5$. 
Notice that $\bu_{\{4,5,6,7\}}^\top Q\bp = 0$.
}
\label{f:4a}
\end{figure}

Figure \ref{f:4a} (a) gives an example. 
In this figure, 
all edges are directed from the low
index vertex to the high index vertex. 
Branch $e_1$ determines a cutset (more precisely, a 
fundamental cutset w.r.t. to the depicted spanning tree) separating vertices 2 and 7 from the remaining ones. 
Edge $e_6$ isolates just vertex 6 from the rest (again, a fundamental cutset). 
The fundamental cut determined by branch $e_1$ is represented by vector 
$q = (1, 0, 0, 0, 0, 0, -1, 1, 0, 1, 0, 1)$.
This is because if we interpret the orientation of branch $e_1$ as outward to the cut, then $e_7$ is inward,
$e_8$ is outward, as well as $e_{10}$ and $e_{12}$.
The matrix in Figure \ref{f:4a} (b) contains as rows all fundamental
cutsets. This is usually called the {\em fundamental cutset matrix}, often denoted by $Q$ (recall that
this matrix depends on spanning tree $T$ -- for readability, we drop this dependence from our notation). 
Matrix $Q$ has rank $n-1$. 
Moreover, 
any cut (viewed as an $m$-dimensional vector)
in the graph can be represented as a linear
combination of fundamental cutset vectors with linear combination coefficients $-1$, $+1$, and $0$.
In essence, cuts are an $(n-1)$-dimensional vector space with fundamental cutsets (rows $Q_i$ of Q) as basis.
It is important to observe that the vectors $Q_i$ involved in this representation
are precisely those corresponding to the branches of $T$ that are either moving inward (coefficient $-1$)
or outward (coefficient $+1$). Hence the fewer are the branches of $T$ cutting inward or outward, 
the sparser is this representation.
Matrix $Q$ 
has also further properties, like
total unimodularity. This implies that any linear combination of their rows 
with coefficients in $\{-1,0,+1\}$
will result in a vector whose coefficients are again in $\{-1,0,+1\}$.

To summarize, given a spanning tree $T$ of $G$, a direction for $G$'s edges, and the associated matrix 
$Q$, any cutset\footnote
{
Though we are only interested in cutsets here, this statement holds more generally for any 
cut of the graph.
} 
$q$ in $G$ can be represented as an $m$-dimensional vector $\bq = Q^\top\,\bu$, where 
$\bu \in \{-1,0,+1\}^{n-1}$  has as many nonzero components as are the branches of $T$ belonging to $q$.
With this representation (induced by $T$) in hand, we are essentially aimed at learning in a 
sequential fashion $\bu$'s components. 

In order to tie this up with our similarity problem, we view the edges belonging to a given
cutset as the cut edges separating a (connected) cluster from the rest of the graph, and then
associate with any given $K$-labeling of the vertices of $G$ a sequence of $K$ weight vectors $\bu_k$,
$k = 1, \ldots, K$, each one corresponding to one label. Since a given label can spread over
multiple clusters (i.e., the vertices belonging to a given class label need not be a connected component
of $G$), we first need to collect connected
components belonging to the same cluster by summing the associated coefficient vectors.
%
As an example, suppose in Figure \ref{f:4a} (a) 
we have 3 vertex labels corresponding to the three colors.
The blue cluster contains vertices $4,5,6$ and $7$, the green one vertex $1$, and the red one vertices 
$2$ and $3$. Now, whereas the blue and the green labels are connected, the red one is not. 
Hence we have $\bu_{\{4,5,6,7\}} = (0,0,-1,-1,-1,-1)^\top$, $\bu_{\{1\}} = (1,1,1,1,0,0)^\top$,
and $\bu_{\{2,3\}} = (-1,-1,0,0,1,1)^\top$ is the sum of the two cutset coefficient vectors
$\bu_{\{2\}} = (-1,0,0,0,1,0)^\top$, and $\bu_{\{3\}} = (0,-1,0,0,0,1)^\top$.
%
%
%
In general, our goal will then be to learn a sparse (and rank-$K$) matrix 
$U = \sum_{k = 1}^K \bu_{k}\bu_{k}^\top$, where $\bu_k$ corresponds to the $k$-th (connected or 
disconnected) class label.


Consistent with the above, we represent the pair of vertices $(i_t,j_t)$ as an indicator 
vector encoding the unique path in $T$ 
that connects the two vertices. This encoding takes edge orientation into account.
For instance, the pair of vertices $(6,7)$ in Figure \ref{f:4a} (c) is connected in $T$ by path 
$p_{(6\rightarrow 7)} = 6 \rightarrow 3 \rightarrow 1 \rightarrow 2  \rightarrow 7$.
According to the direction of traversed edges (edge $e_1$ is traversed according to its orientation, 
edge $e_2$ in the opposite direction, etc.), path $p_{(6\rightarrow 7)}$ is represented by vector
$\bp = (1,-1,0,0,1,-1,0,0,0,0,0,0)^\top = ((\bp'_t)^\top | 0^\top_{m-n+1})$, hence 
$Q\bp = \bp'_t = (1,-1,0,0,1,-1)^\top$.\footnote
{
Any other path connecting $6$ to $7$ in $G$ would yield the same representation. For instance,
going back to Figure \ref{f:4a} (a), consider path $p'_{(6\rightarrow 7)} = 6 \rightarrow 4 \rightarrow 7$, 
whose edges are not in $T$. This gives $\bp' = (0, 0, 0, 0, 0, 0, 0, 0, -1, 1, 0, 0)^\top$. Yet,
$Q\bp' = Q\bp = (1,-1,0,0,1,-1)^\top$.
This invariance holds in general: Given the pair $(i,j)$, the quantity $Q \bp$ is independent of $\bp$, 
if we let $\bp$ vary over all paths in $G$ departing from $i$ and arriving at $j$. 
This common value is the $(n-1)$-dimensional
vector containing the edges in the unique path in $T$ joining $i$ and $j$ 
(taking traversal directions into account). 
Said differently, once we are given $T$, the quantity $Q \bp$ only depends on $i$ and $j$, 
not on the path chosen to connect them.
This invariance easily follows from the fact that cuts are orthogonal to circuits, see,
e.g., \cite[Ch.6]{gy03}.
}
It is important to observe that computing $Q\bp$ does not require full knowledge of matrix $Q$,
since $Q\bp$ only depends on $T$ and the way its edges are traversed.
With the above handy, we are ready to prove Theorem \ref{t:unknown}.
\begin{figure}[h!]
\begin{picture}(0,290)(0,290)
\scalebox{0.7}{\includegraphics{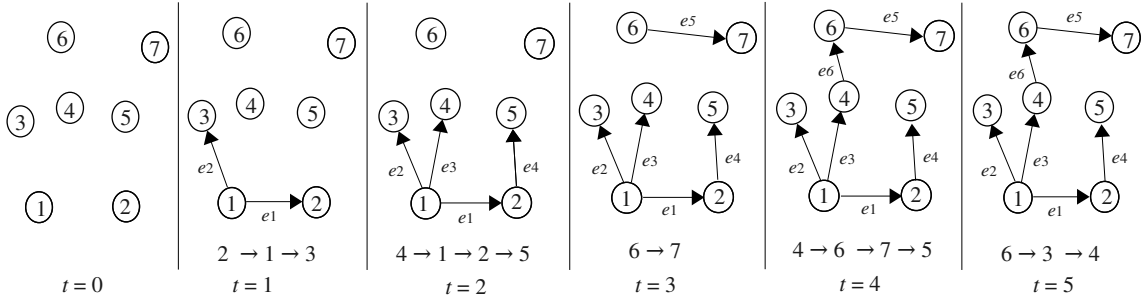}}
\end{picture}
\vspace{-2.1in}
\caption{
The way the $r$-norm Perceptron for similarity prediction of Section \ref{s:unknown} builds
instance vectors. 
At the beginning of the game ($t=0$) the algorithm is only
aware of the number of vertices ($n=7$ in this case). 
In round $t=1$ the pair $(2,3)$ is generated, along with the connecting path $2 \rightarrow 1 \rightarrow 3$.
Since none of the two revealed edges is creating cycles, the two edges $(1,2)$ and $(1,3)$ are
added to the forest in this order. Hence, $\bx_1 = (-1,1,0,0,0,0)^\top$. 
Round $t=2$: pair $(4,5)$ and path 
$4 \rightarrow 1 \rightarrow 2 \rightarrow 5$ are disclosed. 
Edges $(1,4)$ and $(2,5)$ are revealed to the
algorithm for the first time.
The associated vector is then $\bx_2 = (1,0,-1,1,0,0)^\top$. 
Round $t=3$: A new edge is revealed which is disconnected from the
previous subtree. We have $\bx_3 = (0,0,0,0,1,0)^\top$. 
Round $t=4$: The algorithm receives pair $(4,5)$ and corresponding
path $4 \rightarrow 6 \rightarrow 7 \rightarrow 5$. While edge $(4,6)$ is added to the forest,
causing the two subtrees to merge, neither edge $(6,7)$ nor edge $(7,5)$ is added. In particular,
$(7,5)$ is not added because of the presence of an alternative path in the current forest
(which is now a single tree) joining the two vertices. Hence the observed path 
$4 \rightarrow 6 \rightarrow 7 \rightarrow 5$ gets replaced by path
$4 \rightarrow 1 \rightarrow 2 \rightarrow 5$, and the corresponding instance vector is
$\bx_4 = (1,0,-1,1,0,0)^\top$.
Round $t=5$: Since we have obtained a spanning tree, 
from this point on, no other edges will be added. In this round 
we have $\bx_5 = (0,0,0,0,0,-1)^\top$, since the alternative (single edge) path 
$6 \rightarrow 4$ connecting $6$ to $4$ is already contained in the tree.
}
\label{f:3}
\end{figure}

\ \\
\noindent{\bf Proof of Theorem \ref{t:unknown} }
For the constructed spanning tree\footnote
{
If less than $n-1$ edges end up being revealed, the set of edges maintained by the algorithm
cannot form a spanning tree of $G$. Hence $T$ can be taken to be any spanning tree
of $G$ including all the revealed edges.
} 
$T$, let $\bu_k \in \{-1,0,1\}^{n-1}$ be the vector of coefficients
representing the $k$-th class label
w.r.t. the fundamental cutset matrix $Q$ associated with $T$, and set 
$U = \sum_{k=1}^K \bu_k\bu_k^\top$. Also, let $\bx_t$ be the 
instance vector computed by the algorithm at time $t$.
Observe that, by the way $\bx_t$ is constructed (see Figure \ref{f:3} for an illustrative example) 
we have $\bx_t = Q\bp_t = \bp'_t$ for all $t$, being $\bp_t$ and $\bp'_t$ the path vectors
alluded at above. For any given class $k$, we have that $\bu_k^\top\bx_t =\bu_k^\top\bp'_t$. 
Recall that vector $\bu_k$
contains $+1$ in each component corresponding to an outward branch of $T$, $-1$ in each 
component corresponding
to an inward branch, and 0 otherwise. We distinguish four cases (see Figure \ref{f:4a} (c),
for reference):
\begin{enumerate}
\item $i_t$ and $j_t$ are both in the $k$-th class. In this case, the path in $T$ that connects
$i_t$ to $j_t$ must exit and enter the $k$-th class the same number of times (possibly zero). 
Since we only traverse
branches, we have in the dot product $\bu_k^\top\bp'_t$
an equal number of $+1$ terms (corresponding to departures from the $k$-th class)
and $-1$ (corresponding to arrivals). Hence $\bu_k^\top\bp'_t = 0$. Notice that this applies 
even when the $k$-th class is not connected.
\item $i_t$ is in the $k$-th class, but $j_t$ is not. 
In this case, the number of departures from the $k$-th class should exceed the number of arrivals
by exactly one. Hence we must have $\bu_k^\top\bp'_t = 1$.
\item $i_t$ is not in the $k$-th class, but $j_t$ is. By symmetry (swapping $i_t$ with $j_t$),
we have $\bu_k^\top\bp'_t = -1$.
\item Neither $i_t$ nor $j_t$ is in the $k$-th class. Again, we have an equal number of 
arrival/departures to/from the $k$-th class (possibly zero), hence $\bu_k^\top\bp'_t = 0$.
\end{enumerate}
We are now in a position to state our linear separability condition. We can write
\begin{eqnarray*}
\vct(U)^\top\vct(X_t) 
= \tr(U^\top X_t)
= \tr(\sum_{k=1}^K \bu_k\bu_k^\top \bx_t\bx_t^\top)
= \sum_{k=1}^K (\bu_k^\top \bx_t)^2
= \sum_{k=1}^K (\bu_k^\top \bp'_t)^2,
\end{eqnarray*}
which is 2 if $i_t$ and $j_t$ are in different classes (i.e., $i_t$ and $j_t$ are dissimilar), 
and 0, otherwise (i.e., $i_t$ and $j_t$ are similar). 
We have therefore obtained that the label $y_t$ associated with $(i_t,j_t)$ 
is delivered by the following linear-threshold function:
\begin{equation}\label{e:separ}
y_t =
\begin{cases}
1 &{\mbox{if $\vct(U)^\top X_t \geq 1$ }}\\
0 &{\mbox{otherwise}}\,.
\end{cases}
\end{equation}
Because we can interchangeably view $\vct(\cdot)$ as vectors or matrices, this opens up the possibility of running any linear-threshold learning algorithm (on either vectors or
matrices). For $r$-norm Perceptrons with the selected norm $r$ and decision 
threshold, we have a bound on the number $M$ of mistakes of the form \cite{gls97,g03}
\[
M = \cO\left( ||\vct(U)||^2_1\,||\vct(X_t)||^2_{\infty}\,\log n\right)~,
\]
where
\[
||\vct(U)||_1 = ||\vct(\sum_{k=1}^K \bu_k\bu_k^\top)||_1 =  ||\sum_{k=1}^K \vct(\bu_k\bu_k^\top)||_1
\leq \sum_{k=1}^K ||\vct(\bu_k\bu_k^\top)||_1 = \sum_{k=1}^K ||\bu_k||^2_1~,
\]
and 
\[
||\vct(X_t)||_{\infty} = ||\vct(\bx_t\bx_t^\top)||_{\infty} = ||\bx_t||^2_{\infty} = 1~.
\]
Moreover, by the way vectors $\bu_k$ are constructed, 
we have $||\bu_k||_1 = |\Phi_k^T|$. In turn, $|\Phi_k^T| \leq |\Phi_k^G|$ holds
independent of the connectedness of the $k$-th cluster. Putting
together and upper bounding concludes the proof.

\end{document}